\documentclass{article}

\usepackage[preprint]{neurips_2026}
\usepackage{subcaption}
\usepackage{graphicx}
\usepackage{placeins}
\usepackage{algorithm}
\usepackage{algorithmic}
\usepackage[utf8]{inputenc}
\usepackage[T1]{fontenc}
\usepackage{hyperref}
\usepackage{url}
\usepackage{booktabs}
\usepackage{multirow}
\usepackage{amsfonts}
\usepackage{nicefrac}
\usepackage{microtype}
\usepackage{xcolor}
\usepackage{tikz}
\usetikzlibrary{arrows.meta,positioning,bending}
\usepackage{amsmath, amsthm, amssymb}
\usepackage{bm}
\usepackage{mathtools}

\newtheorem{theorem}{Theorem}

\newtheorem{proposition}[theorem]{Proposition}
\newtheorem{corollary}[theorem]{Corollary}
\newtheorem{assumption}{Assumption}

\title{Regularized Centered Emphatic Temporal Difference Learning}

\author{%
  Xingguo Chen\thanks{Correspondence: \texttt{chenxg@njupt.edu.cn}.}, Chaohui Wu, Jinguo Ye, Chao Li, Shangdong Yang\\
  Nanjing University of Posts and Telecommunications\\
  \texttt{chenxg@njupt.edu.cn}
  \And
  Guang Yang\\
  Nanjing University
  \And
  Tianyu Liang\\
  Microsoft\\
  \texttt{liangtianyu@microsoft.com}
  \And
  Wenhao Wang\\
  National University of Defense Technology
}

\begin{document}

\maketitle

\begin{abstract}
Off-policy temporal-difference (TD) learning with function approximation faces a structural tradeoff among stability, projection geometry, and variance control. Emphatic TD (ETD) improves the off-policy projection geometry through follow-on emphasis, but the follow-on trace can have high variance. We revisit this tradeoff through Bellman-error centering. Although centering naturally removes a common drift term from TD errors, we show that a naive centered emphatic extension introduces an auxiliary coupling that can destroy the positive-definiteness of the ETD key matrix. We propose \emph{Regularized Emphatic Temporal-Difference Learning} (RETD), which preserves the follow-on trace and regularizes only the auxiliary centering recursion, corresponding to lifting the lower-right block of the coupled key matrix from \(1\) to \(1+c\). We derive the RETD core matrix, prove convergence under a conservative sufficient regularization condition, and evaluate the method on diagnostic linear off-policy prediction tasks. The experiments show that RETD avoids the instability of naive centered emphatic learning, preserves favorable emphatic geometry, and exhibits a robust intermediate regime for the regularization parameter \(c\) across the diagnostics.
\end{abstract}
\section{Introduction}

Off-policy temporal-difference (TD) learning with function approximation is difficult because stability, approximation geometry, and variance control are tightly coupled. The classical deadly triad can make ordinary TD unstable under off-policy sampling \citep{sutton2016emphatic}. Gradient-based methods, including GTD \citep{sutton2008convergent}, GTD2, TDC \citep{sutton2009fast}, proximal TD variants \citep{liu2016proximal,liu2018proximal}, and TDRC \citep{ghiassian2020gradient}, address this stability problem by optimizing objectives associated with TD fixed points. Their guarantees are valuable, but the resulting fixed point is still tied to the ordinary TD projection geometry and can be poorly aligned with the target value function, with approximation error that may be arbitrarily worse than the best approximation \citep{kolter2011fixed}; stability alone therefore does not settle the approximation question.

Emphatic TD (ETD) takes a different route. By using a follow-on trace, ETD changes the projection geometry from the behavior-weighted TD geometry to an emphatically weighted geometry that reflects target-policy bootstrapping \citep{sutton2016emphatic,hallak2016generalized}. This can yield a more appropriate fixed point, especially on diagnostic off-policy examples such as Baird's counterexample. The cost is variance: the follow-on trace accumulates products of importance-sampling ratios and can be highly variable, or even have infinite variance \citep{sutton2016emphatic,zhang2022truncated}. Much of the ETD literature therefore improves practical behavior by controlling, approximating, truncating, learning, or otherwise modifying the emphatic trace itself, including generalized ETD \citep{hallak2016generalized}, PER-ETD and related corrections \citep{guan2022per}, TETD \citep{zhang2022truncated}, deep and learned-emphasis methods \citep{jiang2021emphatic,jiang2022learning}, and LC-ETD \citep{he2023loosely}; all intervene on the trace pathway in one form or another.

This paper asks a complementary question: can we improve the emphatic update structure while preserving the original follow-on-trace backbone? Our starting point is Bellman-error centering \citep{chen2026bellman}. Centering removes a common drift term from the TD error by replacing the raw error with a centered error. Applying this idea to ETD is natural: keep the emphatic weighting, but subtract an online estimate of the mean emphatic TD error. We call the resulting naive construction Centered Emphatic Temporal-Difference Learning (CETD). However, CETD reveals a structural failure. The auxiliary centering variable introduces a low-rank coupling into the joint parameter--auxiliary system, and this coupling can destroy the positive-definiteness property inherited from ETD. In Section~\ref{sec:cetd}, we show this explicitly through the CETD key matrix and a two-state counterexample.

We propose Regularized Emphatic Temporal-Difference Learning (RETD) as a minimal repair. RETD does not truncate, learn, or replace the follow-on trace. Instead, it regularizes only the auxiliary centering recursion: in the coupled key matrix, the lower-right entry changes from \(1\) to \(1+c\). When \(c=0\), RETD reduces to CETD; for moderate \(c>0\), the harmful auxiliary coupling is damped while centering remains active; as \(c\) becomes very large, the auxiliary variable is strongly suppressed and the method becomes increasingly ETD-like. Thus \(c\) is best viewed as a structural control knob between CETD-like and ETD-like regimes, not as a conventional penalty whose largest value is automatically best for either accuracy or stability.

The theory and experiments are organized around this matrix mechanism. We derive the RETD core matrix in Section~\ref{sec:retd} and prove convergence in Section~\ref{sec:theory} under standard Markovian stochastic-approximation assumptions together with a conservative sufficient regularization condition. This general condition is not intended as a tuning rule: in a concrete problem, the relevant threshold is governed by the actual coupled matrix, as illustrated by the two-state calculations. The experiments in Section~\ref{sec:experiments} are diagnostic rather than claims of broad benchmark dominance. Boyan chain tests whether RETD preserves favorable emphatic geometry, Baird's counterexample tests stability under severe off-policy bootstrapping, and a new two-state prediction task isolates the difference between CETD and RETD. We report curves, tail-average RMSE, divergence counts, and learning-rate and regularization scans against TD, GTD2, TDC, TDRC, ETD, TETD, and CETD where applicable.

The main contributions are as follows. First, we formulate CETD as the emphatic counterpart of Bellman-error centering and derive its coupled key matrix. Second, we show that naive centering can lose positive definiteness even in a simple two-state example, explaining why centering and emphatic weighting do not automatically combine safely. Third, we propose RETD, which restores the key-matrix structure by regularizing only the auxiliary centering recursion while preserving the original emphatic trace. Fourth, we derive the RETD core matrix and prove convergence under a conservative sufficient regularization condition. Fifth, we validate the predicted stability--geometry tradeoff in diagnostic linear off-policy prediction experiments using tail RMSE, divergence statistics, stepsize slices, and scans over the regularization parameter \(c\), rather than broad benchmark claims.
\section{Preliminaries}\label{sec:preliminaries}

We consider off-policy policy evaluation in a finite Markov decision process with state space \(\mathcal{S}\), action space \(\mathcal{A}\), transition kernel \(P(\cdot\mid s,a)\), expected reward function \(r(s,a)\), discount factor \(\gamma\in[0,1)\), target policy \(\pi\), and behavior policy \(\mu\). Under linear function approximation, \(v_{\bm{\theta}}(s)=\bm{\phi}(s)^\top\bm{\theta}\), with feature matrix \(\bm{\Phi}\). We write \(\bm{P}_\pi(s,s')=\sum_a\pi(a\mid s)P(s'\mid s,a)\), \(\bm{r}_\pi(s)=\sum_a\pi(a\mid s)r(s,a)\), \(\bm{d}_\mu\) for the stationary distribution of \(\mu\), and \(\bm{D}_\mu=\mathrm{diag}(\bm{d}_\mu)\). We assume that the behavior-induced chain is ergodic and that \(\bm{\Phi}\) has full column rank.

For any diagonal positive definite matrix \(\bm{M}\), let \(\Pi_{\bm{M}}=\bm{\Phi}(\bm{\Phi}^\top\bm{M}\bm{\Phi})^{-1}\bm{\Phi}^\top\bm{M}\) denote projection onto \(\mathrm{span}(\bm{\Phi})\) under the \(\bm{M}\)-weighted inner product. The Bellman operator is \(\mathcal{T}_\pi\bm{v}=\bm{r}_\pi+\gamma\bm{P}_\pi\bm{v}\), and \(\bm{v}_\pi=(\bm{I}-\gamma\bm{P}_\pi)^{-1}\bm{r}_\pi\). Since \(\bm{v}_\pi\notin\mathrm{span}(\bm{\Phi})\) in general, the best \(\bm{M}\)-weighted approximation is \(\Pi_{\bm{M}}\bm{v}_\pi\) \citep{schoknecht2002optimality,scherrer2010should}.

Let \((S_t,A_t,R_{t+1},S_{t+1})\) be generated by \(\mu\), define \(\rho_t=\pi(A_t\mid S_t)/\mu(A_t\mid S_t)\), and let \(\delta_t=R_{t+1}+\gamma\bm{\phi}(S_{t+1})^\top\bm{\theta}_t-\bm{\phi}(S_t)^\top\bm{\theta}_t\). Ordinary off-policy TD(0) updates \(\bm{\theta}_{t+1}=\bm{\theta}_t+\alpha_t\rho_t\delta_t\bm{\phi}(S_t)\). Its fixed point \(\bm{v}_{\mathrm{TD}}=\bm{\Phi}\bm{\theta}_{\mathrm{TD}}\) satisfies \(\bm{v}_{\mathrm{TD}}=\Pi_{\bm{D}_\mu}\mathcal{T}_\pi\bm{v}_{\mathrm{TD}}\), and can be poorly aligned with \(\bm{v}_\pi\) under policy mismatch \citep{kolter2011fixed}.

ETD introduces the follow-on trace and emphatic update
\begin{equation}
F_0=1,\qquad F_t=1+\gamma\rho_{t-1}F_{t-1},\qquad
\bm{\theta}_{t+1}=\bm{\theta}_t+\alpha_tF_t\rho_t\delta_t\bm{\phi}(S_t).
\label{etdf}
\end{equation}
Its expected weighting is \(\bm{f}=(\bm{I}-\gamma\bm{P}_\pi^\top)^{-1}\bm{d}_\mu\), \(\bm{F}=\mathrm{diag}(\bm{f})\), and the ETD fixed point \(\bm{v}_{\mathrm{ETD}}=\bm{\Phi}\bm{\theta}_{\mathrm{ETD}}\) satisfies \(\bm{v}_{\mathrm{ETD}}=\Pi_{\bm{F}}\mathcal{T}_\pi\bm{v}_{\mathrm{ETD}}\). The key difference is geometric: TD is tied to the behavior-weighted projection \(\Pi_{\bm{D}_\mu}\bm{v}_\pi\), whereas ETD is tied to the emphatically weighted projection \(\Pi_{\bm{F}}\bm{v}_\pi\). Because \(\bm{F}\) reflects both behavior-policy visitation and target-policy bootstrapping, it often induces a more appropriate approximation geometry; empirically, \citet{hallak2016generalized} showed that the ETD fixed point is often closer to its optimal projection than the TD fixed point.

This geometric gain comes with higher variance. From \eqref{etdf}, the follow-on trace accumulates products of importance-sampling ratios, so \(F_t\rho_t\delta_t\bm{\phi}(S_t)\) can be substantially noisier than the TD update. Most ETD modifications therefore reduce variance through truncation, clipping, restarts, or other changes to the emphatic trace. Bellman-error centering offers a complementary route: it targets the TD-error structure directly by replacing an error with its centered version. This motivates asking whether centering can be incorporated into ETD without disrupting that emphatic mechanism.
\section{Centered Emphatic Temporal-Difference Learning}\label{sec:cetd}

We now introduce Centered Emphatic Temporal-Difference Learning (CETD). 
We begin with a brief review of off-policy centered temporal-difference learning (CTD), which provides the centering idea underlying CETD. 
As discussed in the preliminaries, the goal of centering is to reduce the common drift contained in the TD error by subtracting an estimate of its mean, so that the update is driven by a centered error rather than the raw TD error.

This idea can be understood more systematically through Bellman error centering \citep{chen2026bellman}. 
Let the centering operator \(C\) be defined by
\[
C\xi = \xi - \mathbb{E}[\xi]
\]
for any integrable random variable \(\xi\). 
Applying \(C\) to the Bellman error gives
\[
C\delta_t = \delta_t - \mathbb{E}[\delta_t].
\]
Accordingly, the centered fixed-point condition becomes
\[
\mathbb{E}\big[(\delta_t-\mathbb{E}[\delta_t])\bm{\phi}(S_t)\big] = 0,
\]
which replaces the usual second-moment relation by a covariance-type condition. 
In particular, the corresponding centered TD solution satisfies
\[
-\bm{A}_{\mathrm{c}}\bm{\theta} + \bm{b}_{\mathrm{c}} = 0,
\qquad
\bm{A}_{\mathrm{c}} = \mathrm{Cov}\!\big(\bm{\phi}(S_t),\bm{\phi}(S_t)-\gamma\bm{\phi}(S_{t+1})\big),
\qquad
\bm{b}_{\mathrm{c}} = \mathrm{Cov}\!\big(R_{t+1},\bm{\phi}(S_t)\big).
\]

An online implementation introduces an auxiliary scalar \(\omega_t\) to track the mean corrected TD term: \(\omega_{t+1}=\omega_t+\alpha_t(\rho_t\delta_t-\omega_t)\), \(\bm{\theta}_{t+1}=\bm{\theta}_t+\alpha_t(\rho_t\delta_t-\omega_t)\bm{\phi}(S_t)\). Thus CTD can be viewed as an online realization of Bellman error centering.

Motivated by this perspective, we incorporate the centering idea into the emphatic TD framework. 
The resulting method, Centered Emphatic Temporal-Difference Learning (CETD), keeps the emphatic weighting mechanism of ETD while replacing the raw emphatic TD term by its centered version. 
Specifically, with \(F_t\) updated by \eqref{etdf}, CETD uses
\begin{equation}
\begin{aligned}
\bm{\theta}_{t+1}&=\bm{\theta}_t+\alpha_t\bigl(F_t\rho_t\delta_t-\omega_t\bigr)\bm{\phi}(S_t),\\
\omega_{t+1}&=\omega_t+\alpha_t\bigl(F_t\rho_t\delta_t-\omega_t\bigr).
\end{aligned}
\end{equation}

In this sense, CETD can be viewed as the emphatic counterpart of CTD, with the same centering principle applied after emphatic reweighting.

\subsection{The key matrix of CETD}

We next characterize the mean dynamics of CETD through its coupled single-timescale system. For \(\bm{z}_t=[\bm{\theta}_t^\top,\omega_t]^\top\) and \(\delta_t=R_{t+1}+\gamma\bm{\phi}(S_{t+1})^\top\bm{\theta}_t-\bm{\phi}(S_t)^\top\bm{\theta}_t\), the CETD update is
\begin{equation}
\bm{z}_{t+1}=\bm{z}_t+\alpha_t\bigl(\overline{\bm{h}}_t-\overline{\bm{G}}_t\bm{z}_t\bigr),
\label{eq:cetd-joint-update}
\end{equation}
where the drift components are
\[
\overline{\bm{h}}_t=\begin{bmatrix}F_t\rho_tR_{t+1}\bm{\phi}(S_t)\\ F_t\rho_tR_{t+1}\end{bmatrix},\qquad
\overline{\bm{G}}_t=\begin{bmatrix}
F_t\rho_t\bm{\phi}(S_t)(\bm{\phi}(S_t)-\gamma\bm{\phi}(S_{t+1}))^\top & \bm{\phi}(S_t)\\[1mm]
F_t\rho_t(\bm{\phi}(S_t)-\gamma\bm{\phi}(S_{t+1}))^\top & 1
\end{bmatrix}.
\]

Taking expectations and passing to the limit yield the key matrix
\begin{equation}
\overline{\bm{G}}_{\mathrm{CETD}}
=
\begin{bmatrix}
\bm{\Phi}^\top \bm{F}(\bm{I}-\gamma \bm{P}_\pi)\bm{\Phi} & \bm{\Phi}^\top \bm{d}_\mu\\[1mm]
\bm{d}_\mu^\top \bm{\Phi} & 1
\end{bmatrix}.
\label{eq:cetd-key}
\end{equation}

\subsection{A New Two-State Counterexample for CETD}

Consider the following two-state counterexample for CETD. 
Figure~\ref{fig:two-state-cetd} shows the behavior-policy transitions in solid lines and the target-policy transitions in dashed lines.

\begin{figure}[t]
\centering
\begin{tikzpicture}[>=latex, line width=0.95pt, scale=1]
\tikzstyle{state}=[circle, draw=black, minimum size=11mm, inner sep=0pt]

\node[state] (s1) at (0,0) {\(s_1\)};
\node[state] (s2) at (5.8,0) {\(s_2\)};

\draw[->, dashed] (s1) to[bend left=38]
node[midway, above=3pt, draw=black, dashed, circle, inner sep=1pt] { \(1.0\)} (s2);
\draw[->, dashed] (s2) to[loop above]
node[midway, above=3pt, draw=black, dashed, circle, inner sep=1pt] { \(1.0\)} (s2);

\draw[->] (s1) to[loop left] node[left] {\(0.05\)} (s1);
\draw[->] (s1) to[bend right=24] node[midway, below=2pt] {\(0.95\)} (s2);
\draw[->] (s2) to[bend right=24] node[midway, below=2pt] {\(0.05\)} (s1);
\draw[->] (s2) to[loop right] node[right] {\(0.95\)} (s2);

\node at (0,-1.3) {\(\bm{\phi}(s_1)=1\)};
\node at (5.8,-1.3) {\(\bm{\phi}(s_2)=0.6\)};
\end{tikzpicture}
\caption{A new two-state counterexample for CETD.}
\label{fig:two-state-cetd}
\end{figure}

Let
\[
\bm{P}_\mu=
\begin{bmatrix}
0.05 & 0.95\\
0.05 & 0.95
\end{bmatrix},
\qquad
\bm{P}_\pi=
\begin{bmatrix}
0 & 1\\
0 & 1
\end{bmatrix},
\qquad
\bm{d}_\mu=
\begin{bmatrix}
0.05\\
0.95
\end{bmatrix},
\]
and
\[
\bm{\Phi}=
\begin{bmatrix}
1\\
0.6
\end{bmatrix},
\qquad
\gamma=0.9,
\qquad
\bm{r}_\pi=\bm{0}.
\]
Then \(\bm{d}_\mu\) is the stationary distribution of \(\bm{P}_\mu\), and
\[
\bm{f}
=
(\bm{I}-\gamma \bm{P}_\pi^\top)^{-1}\bm{d}_\mu
=
\begin{bmatrix}
0.05\\
9.95
\end{bmatrix},
\qquad
\bm{F}=\mathrm{diag}(\bm{f}).
\]

A direct calculation gives
\[
\bm{\Phi}^\top \bm{F}(\bm{I}-\gamma \bm{P}_\pi)\bm{\Phi}
=
0.3812,
\qquad
\bm{d}_\mu^\top \bm{\Phi}
=
0.62.
\]
Thus the CETD key matrix in this example is
\begin{equation}
\overline{\bm{G}}_{\mathrm{CETD}}
=
\begin{bmatrix}
0.3812 & 0.62\\
0.62 & 1
\end{bmatrix}.
\label{eq:cetd-key-two-state}
\end{equation}

To show that \(\overline{\bm{G}}_{\mathrm{CETD}}\) is not positive definite, take \(\bm{x}=[1,-0.62]^\top\). Then
\[
\bm{x}^\top \overline{\bm{G}}_{\mathrm{CETD}} \bm{x}
=
0.3812 - 2(0.62)(0.62) + 0.62^2
=
0.3812 - 0.3844
=
-0.0032 < 0.
\]
Therefore, \(\overline{\bm{G}}_{\mathrm{CETD}}\) is not positive definite in this example. 
Consequently, CETD loses the required positivity property even in this simple two-state setting.
\section{Regularized Emphatic Temporal-Difference Learning}\label{sec:retd}

The two-state counterexample above shows that CETD may lose the required positivity property under the coupled single-timescale formulation. The problem is not the emphatic trace itself: the follow-on trace is still the mechanism that changes the projection geometry. The unstable part is the auxiliary centering channel, which subtracts \(\omega_t\) from the value update and is itself driven by the same emphatic TD error. If this auxiliary variable reacts too strongly, centering introduces a destabilizing low-rank coupling into the joint system.

RETD therefore keeps the emphatic trace unchanged and adds damping only to the auxiliary recursion. The parameter \(c\) controls this damping. Small \(c\) leaves the method close to CETD, moderate \(c\) weakens the harmful coupling while preserving centering, and very large \(c\) drives \(\omega_t\) toward zero, making the update increasingly ETD-like. With \(F_t\) updated by \eqref{etdf} and \(c>0\), Regularized Emphatic Temporal-Difference Learning (RETD) uses
\begin{equation}
\begin{aligned}
\bm{\theta}_{t+1}&=\bm{\theta}_t+\alpha_t\bigl(F_t\rho_t\delta_t-\omega_t\bigr)\bm{\phi}(S_t),\\
\omega_{t+1}&=\omega_t+\alpha_t\bigl(F_t\rho_t\delta_t-(1+c)\omega_t\bigr).
\end{aligned}
\end{equation}

When \(c=0\), RETD reduces to CETD; for \(c>0\), the factor \((1+c)\) adds controlled damping to the \(\omega_t\)-update while preserving the emphatic weighting mechanism. Thus the regularizer should be read dynamically rather than variationally: it is not a penalty on the value function or Bellman error, but a conditioning term for the coupled \((\bm{\theta},\omega)\) ODE that prevents the auxiliary centering channel from overturning the positive definiteness supplied by emphatic weighting.

\subsection{The key matrix of RETD}

We now derive the coupled system matrix associated with RETD. For \(\bm{z}_t=[\bm{\theta}_t^\top,\omega_t]^\top\) and \(\delta_t=R_{t+1}+\gamma\bm{\phi}(S_{t+1})^\top\bm{\theta}_t-\bm{\phi}(S_t)^\top\bm{\theta}_t\), the RETD update is
\begin{equation}
\bm{z}_{t+1}=\bm{z}_t+\alpha_t\bigl(\overline{\bm{h}}_t-\overline{\bm{G}}_t\bm{z}_t\bigr),
\label{eq:retd-joint-update}
\end{equation}
where the drift components are
\[
\overline{\bm{h}}_t=\begin{bmatrix}F_t\rho_tR_{t+1}\bm{\phi}(S_t)\\ F_t\rho_tR_{t+1}\end{bmatrix},\qquad
\overline{\bm{G}}_t=\begin{bmatrix}
F_t\rho_t\bm{\phi}(S_t)(\bm{\phi}(S_t)-\gamma\bm{\phi}(S_{t+1}))^\top & \bm{\phi}(S_t)\\[1mm]
F_t\rho_t(\bm{\phi}(S_t)-\gamma\bm{\phi}(S_{t+1}))^\top & 1+c
\end{bmatrix}.
\]

Taking expectations and passing to the limit yield the key matrix
\begin{equation}
\overline{\bm{G}}_{\mathrm{RETD}}
=
\begin{bmatrix}
\bm{\Phi}^\top \bm{F}(\bm{I}-\gamma \bm{P}_\pi)\bm{\Phi} & \bm{\Phi}^\top \bm{d}_\mu\\[1mm]
\bm{d}_\mu^\top \bm{\Phi} & 1+c
\end{bmatrix}.
\label{eq:retd-key}
\end{equation}

\subsection{A two-state analysis}

Under the same two-state setting as in the CETD section, the expectation terms satisfy
\[
\bm{\Phi}^\top \bm{F}(\bm{I}-\gamma \bm{P}_\pi)\bm{\Phi}
=
0.3812,
\qquad
\bm{d}_\mu^\top \bm{\Phi}
=
0.62.
\]
Therefore the RETD key matrix in this example is
\begin{equation}
\overline{\bm{G}}_{\mathrm{RETD}}
=
\begin{bmatrix}
0.3812 & 0.62\\
0.62 & 1+c
\end{bmatrix}.
\label{eq:retd-key-two-state}
\end{equation}

Since \(\overline{\bm{G}}_{\mathrm{RETD}}\) is symmetric, positive definiteness is equivalent to positivity of the leading principal minors. The first minor is \(0.3812>0\), and
\[
\det(\overline{\bm{G}}_{\mathrm{RETD}})=0.3812(1+c)-0.62^2=0.3812(1+c)-0.3844.
\]
Thus \(\overline{\bm{G}}_{\mathrm{RETD}}\succ0\) iff \(0.3812(1+c)-0.3844>0\), equivalently
\begin{equation}
c>\frac{0.3844}{0.3812}-1\approx 0.0084.
\label{eq:c-range-retd}
\end{equation}
In this two-state example, a sufficiently small positive regularization already restores positivity in the coupled single-timescale analysis, showing that the failure is local to the auxiliary channel rather than to emphatic weighting itself.

\subsection{Convergence discussion}

The matrix expression in \eqref{eq:retd-key} shows that RETD forms a continuous bridge between CETD and ETD. 
When \(c=0\), the method reduces to CETD; as \(c\) increases, the lower-right entry of the coupled matrix grows and the system becomes more stable.

Therefore, once \(c\) is chosen so that \(\overline{\bm{G}}_{\mathrm{RETD}}\) is positive definite, the associated expected system matrix satisfies the required stability property. 
Under the standard stochastic approximation conditions together with the ergodicity assumptions stated in Section~\ref{sec:theory}, the RETD iterates converge to the unique equilibrium of the associated ODE, as formalized next.
\section{Theoretical Analysis}
\label{sec:theory}

In this section, we establish the convergence properties of the proposed RETD algorithm. Our analysis builds upon the Ordinary Differential Equation (ODE) framework for stochastic approximation algorithms. To rigorously analyze the asymptotic behavior, we require the following standard assumptions regarding the environment, the step sizes, and the structural regularization.

\begin{assumption}[Markov Chain Regularity]
\label{assum:markov}
The state space \(\mathcal{S}\) and action space \(\mathcal{A}\) are finite. The stochastic process \(\{Y_t\}\) induced by the behavior policy \(\mu\) forms a geometrically ergodic Markov chain with a unique stationary distribution \(d_Y\).
\end{assumption}

\begin{assumption}[Learning Rates]
\label{assum:lr}
The step-size sequence \(\{\alpha_t\}\) is positive, non-increasing, and satisfies the standard Robbins-Monro conditions
\[
\sum_{t=0}^{\infty} \alpha_t = \infty,
\qquad
\sum_{t=0}^{\infty} \alpha_t^2 < \infty.
\]
\end{assumption}

\begin{assumption}[Structural Regularization]
\label{assum:regularization}
The regularization parameter \(c\) satisfies
\[
c \ge \frac{\gamma}{1-\gamma}.
\]
\end{assumption}

It is worth emphasizing that this lower bound is a conservative sufficient condition for the general convergence analysis, not a tuning rule or a sharp threshold in every problem instance. For example, in the two-state calculation in Equation~\ref{eq:c-range-retd}, the RETD key matrix is already positive definite for \(c>0.0084\). The fixed-stepsize experiments therefore do not claim to verify the theorem by choosing \(c\ge\gamma/(1-\gamma)\); instead, they test whether the same regularized-coupling mechanism remains useful under practical, problem-dependent choices of \(c\).

The role of \(c\) should therefore be read at two levels. The theorem uses a problem-independent bound to make the proof uniform over finite off-policy prediction problems. In a concrete instance, however, the relevant threshold is governed by the actual coupled matrix in \eqref{eq:retd-key}. The two-state analysis gives an example of this sharper problem-dependent calculation, while the empirical \(c\)-scans in Appendix~\ref{app:experiment_details} provide a practical way to locate the intermediate regime where centering is active but the auxiliary recursion is sufficiently damped.

Under the single-timescale setting, both the main parameter \(\bm{\theta}_t\) and the auxiliary variable \(\omega_t\) are updated synchronously using the same learning rate sequence \(\{\alpha_t\}\). Equivalently, the RETD iteration evolves in the joint variable
\[
\bm{z}_t \doteq
\begin{bmatrix}
\bm{\theta}_t\\
\omega_t
\end{bmatrix}.
\]
Based on the ergodicity of the Markov chain, the Lipschitz continuity of the update function, and the structural condition imposed by \(c\), we present the main convergence theorem for RETD.

\begin{theorem}[Convergence of RETD]
\label{thm:main_convergence}
Let Assumptions \ref{assum:markov}, \ref{assum:lr}, and \ref{assum:regularization} hold. The joint iterative sequence \(\{\bm{z}_t\}\) generated by the RETD update rule is almost surely bounded, i.e.,
\[
\sup_t \|\bm{z}_t\| < \infty
\qquad
\text{a.s.}
\]
Furthermore, \(\{\bm{z}_t\}\) converges almost surely to the unique globally asymptotically stable equilibrium of the associated ODE
\[
\dot{\bm{z}}(t)=\overline{\bm{h}}_{\mathrm{RETD}}-\overline{\bm{G}}_{\mathrm{RETD}}\bm{z}(t),
\]
where \(\overline{\bm{G}}_{\mathrm{RETD}}\) is given by \eqref{eq:retd-key}.
\end{theorem}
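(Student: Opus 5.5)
The proof applies the ODE method for linear stochastic approximation driven by Markov noise, in the form used by Yu for ETD. There are two ingredients. The first is an algebraic one: under Assumption~\ref{assum:regularization}, the symmetric part of $\overline{\bm{G}}_{\mathrm{RETD}}$ is positive definite. Hence $-\overline{\bm{G}}_{\mathrm{RETD}}$ is Hurwitz and the ODE $\dot{\bm{z}}=\overline{\bm{h}}_{\mathrm{RETD}}-\overline{\bm{G}}_{\mathrm{RETD}}\bm{z}$ has the unique globally asymptotically stable equilibrium $\bm{z}^\star=\overline{\bm{G}}_{\mathrm{RETD}}^{-1}\overline{\bm{h}}_{\mathrm{RETD}}$. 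The second is a probabilistic one: the noise $(\overline{\bm{h}}_t-\overline{\bm{G}}_t\bm{z})-(\overline{\bm{h}}_{\mathrm{RETD}}-\overline{\bm{G}}_{\mathrm{RETD}}\bm{z})$ averages out. Once both are in place, boundedness (via the Borkar--Meyn scaled ODE $\dot{\bm{z}}=-\overline{\bm{G}}_{\mathrm{RETD}}\bm{z}$, whose origin is globally asymptotically stable) and almost sure convergence follow from the standard theorem.

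For the algebraic step, write $\bm{A}=\bm{\Phi}^\top\bm{F}(\bm{I}-\gamma\bm{P}_\pi)\bm{\Phi}$ and $\bm{b}=\bm{\Phi}^\top\bm{d}_\mu$. Set $\bm{v}=\bm{\Phi}\bm{\theta}$, $X=\|\bm{v}\|_{\bm{F}}^2$ and $Y=\|\bm{v}\|_{\bm{D}_\mu}^2$. The identity $\gamma\bm{P}_\pi^\top\bm{f}=\bm{f}-\bm{d}_\mu$ together with Jensen's inequality gives
\[
\|\bm{P}_\pi\bm{v}\|_{\bm{F}}^2\le \bm{f}^\top\bm{P}_\pi(\bm{v}\circ\bm{v})=\tfrac{1}{\gamma}(X-Y).
\]
By Cauchy--Schwarz and AM--GM,
\[
\gamma\,\bm{v}^\top\bm{F}\bm{P}_\pi\bm{v}\le\sqrt{\gamma(X-Y)}\sqrt{\gamma X}\le \tfrac12\bigl((X-Y)+\gamma X\bigr),
\]
so $\bm{\theta}^\top\bm{A}\bm{\theta}\ge\tfrac12\bigl((1-\gamma)X+Y\bigr)$. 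Since $\bm{f}\ge\bm{d}_\mu$ componentwise, $X\ge Y$, and therefore $\bm{\theta}^\top\bm{A}\bm{\theta}\ge\tfrac{2-\gamma}{2}Y$. Because $\bm{d}_\mu$ is a probability vector, Cauchy--Schwarz also gives $(\bm{b}^\top\bm{\theta})^2=(\bm{d}_\mu^\top\bm{v})^2\le Y$. Combining these,
\[
\bm{z}^\top\overline{\bm{G}}_{\mathrm{RETD}}\bm{z}\ge \tfrac{2-\gamma}{2}Y-2|\omega|\sqrt{Y}+(1+c)\omega^2 .
\]
This quadratic form in $(\sqrt{Y},|\omega|)$ is positive definite when $(1+c)(2-\gamma)/2>1$, i.e.\ $c>\gamma/(2-\gamma)$. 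That condition is implied by $c\ge\gamma/(1-\gamma)$ for every $\gamma\in(0,1)$. Full column rank of $\bm{\Phi}$ and $\bm{d}_\mu>0$ then give $Y>0$ whenever $\bm{\theta}\neq0$. In the degenerate case $\gamma=0$, $c=0$, I would recover strictness from the equality cases of Cauchy--Schwarz, or simply treat it separately. So the conservative constant in Assumption~\ref{assum:regularization} leaves slack, and the positive-definiteness step is not the bottleneck.

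The main obstacle is the noise, because the follow-on trace $F_t$ is unbounded and may have infinite variance. The standard martingale-difference plus bounded-noise conditions therefore fail. My plan is to follow Yu's ETD convergence analysis. I would take the augmented chain $(S_t,A_t,S_{t+1},F_t)$ and show, under Assumption~\ref{assum:markov}, that $\sup_t\mathbb{E}[F_t]<\infty$, that the chain has a unique invariant probability measure, and that it is weak Feller. The RETD increment is affine in $\bm{z}$ with coefficients that are linear in $F_t$ apart from the constant entries $\bm{\phi}(S_t)$ and $1+c$. Yu's uniform-integrability and ergodic-average lemmas therefore transfer verbatim to the augmented $(\bm{\theta},\omega)$ system, and they identify the limiting mean field as $\overline{\bm{h}}_{\mathrm{RETD}}-\overline{\bm{G}}_{\mathrm{RETD}}\bm{z}$, with $\mathbb{E}_\infty[F_t\rho_t\bm{\phi}(S_t)\cdots]$ producing the $\bm{F}$-weighting in \eqref{eq:retd-key}. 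Together with Assumption~\ref{assum:lr}, the stability of the scaled ODE from the previous paragraph then yields both $\sup_t\|\bm{z}_t\|<\infty$ a.s.\ and $\bm{z}_t\to\bm{z}^\star$ a.s.
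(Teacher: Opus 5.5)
Your proof is correct in outline, and its key algebraic step follows a genuinely different route from the paper's.

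\textbf{The positive-definiteness step.} The paper completes the square in $\omega$ and reduces the claim to $\bm{\Phi}^\top\bm{K}\bm{\Phi}\succ 0$, where $\bm{K}=\bm{F}(\bm{I}-\gamma\bm{P}_\pi)-\frac{1}{1+c}\bm{d}_\mu\bm{d}_\mu^\top$. It then runs the classical ETD entrywise template: nonpositive off-diagonals, column sums $\frac{c}{1+c}\bm{d}_\mu^\top$ from the emphatic identity, and row sums $(\frac{c}{1+c}-\gamma)\bm{d}_\mu+(1-\gamma)\sum_{t\ge1}(\gamma\bm{P}_\pi^\top)^t\bm{d}_\mu$. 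Requiring these row sums to be positive on their own is what forces $c\ge\gamma/(1-\gamma)$.

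You instead use $\gamma\bm{P}_\pi^\top\bm{f}=\bm{f}-\bm{d}_\mu$ inside a Jensen bound to get $\bm\theta^\top\bm A\bm\theta\ge\frac12\bigl((1-\gamma)\|\bm v\|_{\bm F}^2+\|\bm v\|_{\bm D_\mu}^2\bigr)$. You then control the rank-one coupling by $(\bm d_\mu^\top\bm v)^2\le\|\bm v\|_{\bm D_\mu}^2$. This buys two things:
\begin{itemize}
\item a uniform modulus $\bm z^\top\overline{\bm G}_{\mathrm{RETD}}\bm z\ge\lambda\|\bm z\|^2$, not just bare positivity;
\item the weaker requirement $c>\gamma/(2-\gamma)$, which is below $1$ for every $\gamma<1$. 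At Baird's $\gamma=0.99$ it gives $c>0.99/1.01\approx0.98$ instead of $c\ge 99$.
\end{itemize}
Incidentally, applying the paper's template to $\bm K+\bm K^\top$ gives exactly your threshold. That means adding the row and column sums it already computed and using $\bm f\ge\bm d_\mu$. So the slack lies in how the paper uses diagonal dominance, not in diagonal dominance itself.

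You also phrase the conclusion correctly, as positivity of the symmetric part. The paper's assertion that $\overline{\bm G}_{\mathrm{RETD}}$ is symmetric is false once $\bm\theta$ has more than one coordinate. This is harmless, since a positive quadratic form already gives the Hurwitz property.

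\textbf{The probabilistic step.} Both proofs defer to an external Markov-noise ODE theorem. Your plan is more concrete about the ergodic inputs (invariant measure, weak Feller, $\sup_t\mathbb{E}[F_t]<\infty$, uniform integrability), which the paper only asserts. However, the boundedness step you call standard is the nontrivial one. The classical Borkar--Meyn theorem is for martingale-difference noise. With Markov noise and an unbounded follow-on trace you need one of two things:
\begin{itemize}
\item the Markov-noise extension the paper cites (Liu et al.\ 2025), with Yu's lemmas as inputs; or
\item Yu's own a.s.\ argument for ETD, which is established only for $\alpha_t=O(1/t)$-type step sizes.
\end{itemize}
Either way, some step-size regularity beyond Assumption~\ref{assum:lr} is used, and you should state it. The paper glosses over its own ratio condition in the same way.

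\textbf{Two small corrections.}
\begin{itemize}
\item In your chain the middle term should be $\sqrt{X-Y}\,\sqrt{\gamma X}$, since $\gamma\|\bm v\|_{\bm F}\|\bm P_\pi\bm v\|_{\bm F}\le\gamma\sqrt{X}\sqrt{(X-Y)/\gamma}$. As written it is off by a factor $\sqrt\gamma$, though the AM--GM bound $\frac12\bigl((X-Y)+\gamma X\bigr)$ you then use is correct.
\item At $\gamma=c=0$, strictness cannot be recovered. There $\bm z^\top\overline{\bm G}_{\mathrm{RETD}}\bm z=\sum_s d_\mu(s)\bigl(v(s)+\omega\bigr)^2$, which vanishes at $\bm v=-\omega\bm 1$ whenever $\bm 1\in\mathrm{span}(\bm\Phi)$. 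The matrix is then singular, so the statement itself needs $c>0$ in that corner. The paper's row and column sums also vanish there.
\end{itemize}
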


\begin{proof}[Proof Sketch]
The proof is based on the ODE method for stochastic approximation with Markovian noise \citep{liu2025ode}. Since the RETD update is affine in the joint variable \(\bm{z}_t\), the associated mean field takes the form
\[
\overline{\bm{h}}_{\mathrm{RETD}}-\overline{\bm{G}}_{\mathrm{RETD}}\bm{z}.
\]
The key step is to verify that \(\overline{\bm{G}}_{\mathrm{RETD}}\) is positive definite under Assumption \ref{assum:regularization}. Intuitively, completing the square isolates the auxiliary variable \(\omega\), while the remaining feature-space term becomes positive because the regularizer makes the residual matrix an \(M\)-matrix with positive row and column sums. Positive definiteness implies that the drift matrix \(-\overline{\bm{G}}_{\mathrm{RETD}}\) is Hurwitz, so the limiting ODE has a unique globally asymptotically stable equilibrium.

The remaining regularity conditions follow from the finiteness of the state-action space, the geometric ergodicity of the Markov chain, and the standard properties of the step-size sequence. The complete proof is deferred to Appendix \ref{app:convergence_proof}.
\end{proof}
\section{Experimental studies}
\label{sec:experiments}

The role of these experiments is diagnostic rather than competitive: they isolate the structural mechanism analyzed in Sections~\ref{sec:cetd}--\ref{sec:theory}, namely that naive Bellman-error centering destabilizes the emphatic coupled system, whereas regularizing only the auxiliary centering channel restores usable behavior without replacing the emphatic projection geometry. The evidence is organized around three questions: stability under aggressive off-policy bootstrapping, fixed-point geometry once the iterates remain bounded, and the structural role of the regularization parameter \(c\). The suite is therefore used to isolate mechanisms rather than to claim broad benchmark dominance.

\paragraph{Protocol.}
We use seven linear off-policy prediction tasks: random walk with tabular, inverted, and dependent features, Boyan chain~\citep{boyan1999least}, the two-state counterexample associated with our centered-emphatic analysis, Baird's seven-state counterexample~\citep{baird1995residual}, and the new two-state task introduced to compare CETD and RETD. The baselines are off-policy TD, GTD2, TDC, and TDRC~\citep{sutton2008convergent,sutton2009fast,ghiassian2020gradient}, together with ETD~\citep{sutton2016emphatic}, truncated ETD (TETD)~\citep{zhang2022truncated}, and CETD where defined. The error metric is RMSE against the true value function, or against the zero target in zero-reward counterexamples. Curves are cross-run means with one-standard-deviation bands and no temporal smoothing. The main text reports results at the common stepsize \(\alpha=0.01\); full run counts, horizons, additional stepsizes, and parameter scans are deferred to Appendix~\ref{app:experiment_details}.

\subsection{Main diagnostics: stability and geometry}

Figure~\ref{fig:main_algorithm_comparison} summarizes the two diagnostics carried in the main text. Boyan chain is a standard linear prediction benchmark on which emphatic weighting yields a favorable approximation geometry: RETD tracks the best ETD-style behavior rather than reverting to the ordinary TD geometry, indicating that regularization does not erase the emphatic fixed-point motivation. Baird's counterexample is the strongest off-policy stress test in the suite. At \(\alpha=0.01\), RETD remains numerically stable while ETD and TETD diverge by the criterion of Table~\ref{tab:divergence_rate}; their trajectories are therefore not plotted in panel~(b) and are reported as \textsc{Div.} in Table~\ref{tab:tail_rmse}.

\begin{figure}[t]
\centering
\includegraphics[width=0.62\linewidth]{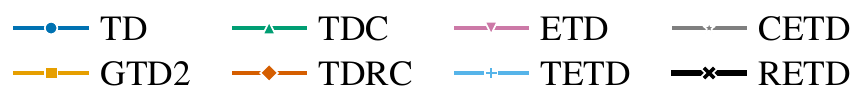}
\vspace{0.25em}

\begin{subfigure}[t]{0.46\linewidth}
  \centering
  \includegraphics[width=\linewidth]{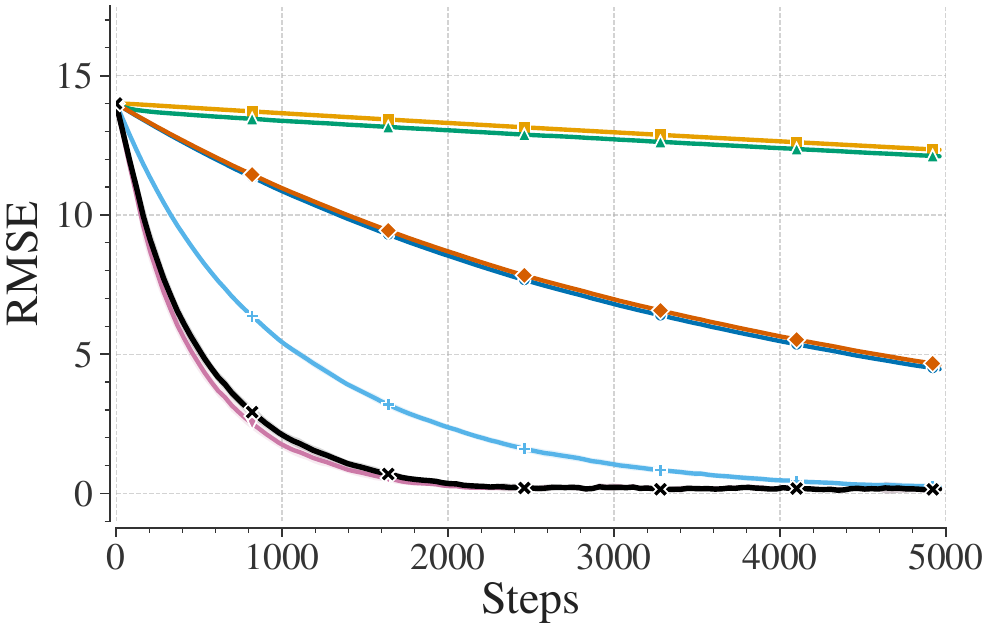}
  \caption{Boyan chain}
\end{subfigure}\hfill
\begin{subfigure}[t]{0.46\linewidth}
  \centering
  \includegraphics[width=\linewidth]{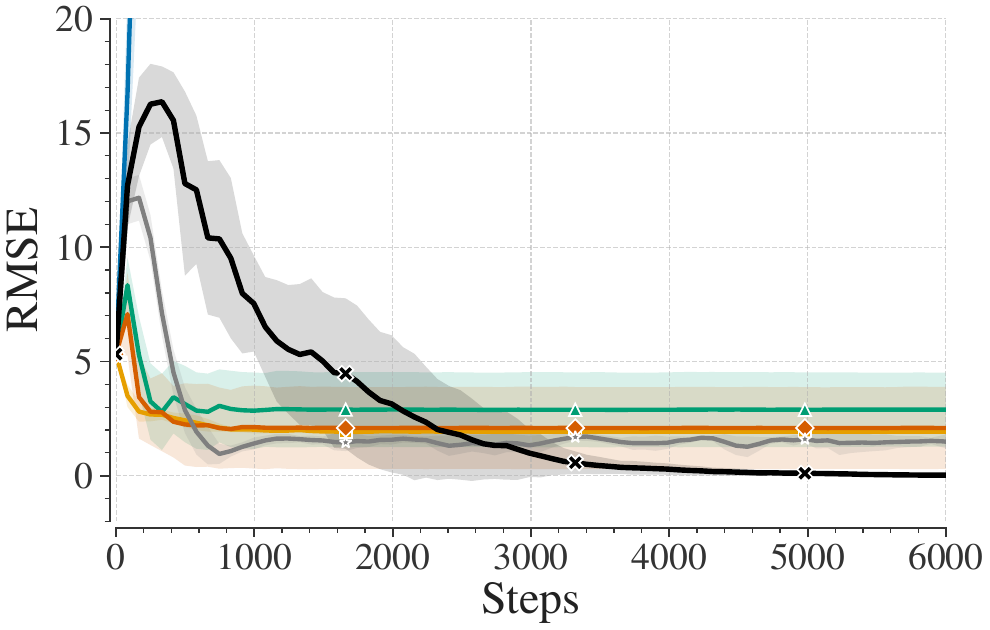}
  \caption{Baird's counterexample}
\end{subfigure}
\caption{Main diagnostic comparisons at \(\alpha=0.01\). Panel~(a) shows the geometry diagnostic on Boyan chain; panel~(b) shows the off-policy stability diagnostic on Baird, where ETD and TETD trajectories are numerically unusable at this stepsize and are reported in Table~\ref{tab:divergence_rate} rather than plotted.}
\label{fig:main_algorithm_comparison}
\end{figure}


\begin{table}[t]
\centering
\footnotesize
\setlength{\tabcolsep}{3.0pt}
\caption{Tail-average RMSE at $\alpha=0.01$. For readability, algorithms are split into two blocks. For each independent run, we average RMSE over the final 10\% of update steps; cells report mean $\pm$ standard deviation across runs. Diverged methods are marked as \textsc{Div.} and excluded from bolding. Bold entries denote the best non-diverged mean and all methods within one standard deviation of it.}
\label{tab:tail_rmse}
\begin{tabular}{@{}lcccc@{}}
\toprule
Environment & TD & GTD2 & TDC & TDRC \\
\midrule
RW tabular & $0.064\pm 0.013$ & $0.272\pm 6.96\!\times\!10^{-3}$ & $0.242\pm 7.54\!\times\!10^{-3}$ & $0.069\pm 0.013$ \\
RW inverted & $0.097\pm 0.010$ & $0.176\pm 5.90\!\times\!10^{-3}$ & $0.113\pm 6.54\!\times\!10^{-3}$ & $0.103\pm 5.37\!\times\!10^{-3}$ \\
RW dependent & $0.170\pm 7.00\!\times\!10^{-3}$ & $0.209\pm 7.49\!\times\!10^{-3}$ & $0.215\pm 0.013$ & $0.175\pm 0.012$ \\
Boyan chain & $4.666\pm 0.054$ & $12.40\pm 2.83\!\times\!10^{-3}$ & $12.17\pm 0.040$ & $4.832\pm 0.053$ \\
Two-state & \textsc{Div.} & $1.457\pm 0.055$ & $1.380\pm 0.348$ & $0.820\pm 0.100$ \\
Baird & \textsc{Div.} & $1.927\pm 6.55\!\times\!10^{-3}$ & $2.883\pm 1.617$ & $2.092\pm 1.801$ \\
New two-state & $8.573\pm 0.067$ & $8.543\pm 0.046$ & $8.574\pm 0.037$ & $8.582\pm 0.034$ \\
\bottomrule
\end{tabular}
\vspace{0.35em}
\begin{tabular}{@{}lcccc@{}}
\toprule
Environment & ETD & TETD & CETD & RETD \\
\midrule
RW tabular & $0.040\pm 0.015$ & $\mathbf{0.023\pm 4.02\!\times\!10^{-3}}$ & $0.030\pm 6.63\!\times\!10^{-3}$ & $0.037\pm 5.57\!\times\!10^{-3}$ \\
RW inverted & $0.047\pm 0.011$ & $\mathbf{0.035\pm 7.72\!\times\!10^{-3}}$ & $0.047\pm 0.011$ & $0.046\pm 0.011$ \\
RW dependent & $\mathbf{0.129\pm 0.016}$ & $\mathbf{0.126\pm 0.016}$ & $\mathbf{0.131\pm 0.015}$ & $\mathbf{0.121\pm 0.015}$ \\
Boyan chain & $\mathbf{0.172\pm 0.108}$ & $0.282\pm 0.091$ & -- & $\mathbf{0.166\pm 0.105}$ \\
Two-state & $\mathbf{(1.14\!\pm\!2.95)\!\times\!10^{-30}}$ & $\mathbf{(1.07\!\pm\!2.90)\!\times\!10^{-19}}$ & $(7.01\!\pm\!20.91)\!\times\!10^{-5}$ & $\mathbf{(1.63\!\pm\!4.89)\!\times\!10^{-21}}$ \\
Baird & \textsc{Div.} & \textsc{Div.} & $1.529\pm 0.153$ & $\mathbf{(1.41\!\pm\!2.11)\!\times\!10^{-4}}$ \\
New two-state & \textsc{Div.} & $5.015\pm 0.457$ & $43.73\pm 6.459$ & $\mathbf{4.131\pm 0.732}$ \\
\bottomrule
\end{tabular}
\end{table}

\begin{table}[t]
\centering
\footnotesize
\setlength{\tabcolsep}{2.5pt}
\caption{Stability details at $\alpha=0.01$ for environments with at least one divergent method (other environments have zero diverged runs for all algorithms and are omitted). A run is counted as diverged if its RMSE trajectory contains NaN/Inf or if its maximum RMSE exceeds $10^3$. Each cell reports the maximum RMSE observed across the 10 runs of that algorithm-environment pair; entries with a \textsuperscript{\textsc{Div.}\,k/n} superscript indicate \(k\) of the \(n\) runs were classified as diverged. Bold entries denote, within each row, the smallest max-RMSE among Stable methods and any Stable method within \(5\%\) of it; diverged methods are excluded from bolding.}
\label{tab:divergence_rate}
\begin{tabular}{@{}lcccccccc@{}}
\toprule
Environment & TD & GTD2 & TDC & TDRC & ETD & TETD & CETD & RETD \\
\midrule
Two-state & $7.10\!\times\!10^{4}$\textsuperscript{\textsc{Div.}\,10/10} & $\mathbf{1.697}$ & $2.659$ & $1.916$ & $\mathbf{1.720}$ & $1.809$ & $2.043$ & $\mathbf{1.754}$ \\
Baird & $2.21\!\times\!10^{12}$\textsuperscript{\textsc{Div.}\,10/10} & $\mathbf{5.318}$ & $11.50$ & $11.35$ & $2.41\!\times\!10^{12}$\textsuperscript{\textsc{Div.}\,50/50} & $5.44\!\times\!10^{10}$\textsuperscript{\textsc{Div.}\,50/50} & $17.54$ & $21.16$ \\
New two-state & $\mathbf{8.802}$ & $\mathbf{8.735}$ & $\mathbf{8.794}$ & $\mathbf{8.790}$ & $3.89\!\times\!10^{3}$\textsuperscript{\textsc{Div.}\,1/10} & $\mathbf{8.747}$ & $64.92$ & $\mathbf{8.758}$ \\
\bottomrule
\end{tabular}
\end{table}

Tables~\ref{tab:tail_rmse} and~\ref{tab:divergence_rate} provide the quantitative counterpart to Figure~\ref{fig:main_algorithm_comparison}. The tail-average RMSE table averages the final 10\% of training steps and so is robust to a single noisy iterate. Diverged methods are reported as \textsc{Div.} and excluded from the bolding rule, which makes the Baird failure of ETD and TETD explicit instead of hiding it behind enormous finite RMSE values. Across the non-diverged methods, RETD is the only emphatic-style method that is simultaneously stable and accurate on Baird, while remaining competitive with the best baselines on the standard environments.

\subsection{Why RETD is needed beyond CETD}

The closest alternative to RETD is CETD: both apply centering to the auxiliary recursion, but only RETD adds the structural regularization analyzed in Section~\ref{sec:theory}. The new two-state row of Table~\ref{tab:tail_rmse} isolates the difference. CETD attains a tail RMSE of \(43.73\) and a maximum RMSE of \(64.92\) (Table~\ref{tab:divergence_rate}), markedly worse than the unbiased baselines whose tail RMSE sits near \(8.5\) on this 1D scalar-feature task; RETD reaches \(4.131\), the smallest tail RMSE of all eight methods. Both CETD and RETD remain numerically bounded, so this is a clean comparison of fixed-point geometry rather than of stability. RETD also matches or slightly improves on ETD (\(5.015\)) and TETD (\(8.543\)) on this row, confirming that the regularizer repairs CETD without surrendering the emphatic geometry. This evidence is the central justification for keeping the regularization term: it turns Bellman-error centering into a usable component of an emphatic update and distinguishes RETD from a naive centered emphatic construction without regularization.

\subsection{Robustness and parameter interpretation}

Appendix~\ref{app:experiment_details} contains the complete seven-environment curves at \(\alpha=0.01\), additional learning-rate slices, and RETD scans over both \(\alpha\) and \(c\). The qualitative pattern across these scans is consistent and structural: very small \(c\) approaches CETD and inherits the indefinite coupled matrix, very large \(c\) damps the auxiliary recursion so strongly that RETD behaves increasingly like ETD, and an intermediate range delivers stable centered emphatic learning. The parameter \(c\) should therefore be read as a structural control knob between CETD-like and ETD-like regimes, not as a conventional penalty whose largest value is best; the practical fixed-stepsize choices used here (\(c=9\) on the standard environments, \(c=0.5\) on Baird) sit in this intermediate range and are not attempts to satisfy the conservative sufficient bound \(c\geq\gamma/(1-\gamma)\).

\subsection{Scope of comparison and limitations}

These diagnostics deliberately stay within linear off-policy prediction so that the matrix mechanism behind RETD can be tested in isolation. For the same reason, we do not directly benchmark against deep or learned-emphasis variants~\citep{jiang2021emphatic,jiang2022learning}, which combine nonlinear function approximation, replay, and an auxiliary emphasis learner; such systems vary along several dimensions at once and would obscure rather than test the linear key-matrix property that motivates RETD. Extending the regularized centering principle to nonlinear function approximation, eligibility traces, and large-scale control is left to future work.
\section{Conclusions and future work}
\label{sec:conclusion}

We presented RETD, a regularized centered extension of emphatic temporal-difference learning. Bellman-error centering is a natural way to reduce drift in TD errors, but applying it naively to ETD destroys the matrix property that makes the emphatic update analytically well behaved. RETD repairs this failure by regularizing only the auxiliary centering recursion, preserving the emphatic trace mechanism while controlling the low-rank coupling introduced by centering. We derived the associated coupled matrix, showed how the regularizer restores positive definiteness, and established convergence through a stochastic-approximation ODE argument. The diagnostic experiments support this view: RETD avoids the instability of naive centered emphatic learning while remaining competitive with standard TD, gradient-TD, and emphatic baselines, and the parameter \(c\) exhibits a meaningful intermediate regime rather than a monotone ``larger-is-better'' behavior.

The broader lesson is that emphatic weighting and centering interact through the joint matrix, not only through the variance of the follow-on trace. The two-state calculations make this interaction explicit: the emphatic block may remain positive while the auxiliary off-diagonal coupling is large enough to make the centered system indefinite. RETD addresses precisely this coupling, so its regularization parameter should be interpreted through the conditioning of the coupled ODE rather than through the usual objective-regularization lens. This also explains why conservative theorem-level bounds and useful fixed-stepsize choices need not coincide; the former guarantee a uniform proof, while the latter are governed by the problem-specific key matrix and are best read together with the empirical \(c\)-scans reported above.

Three directions are particularly natural for future work. First, scaling the empirical study to substantially larger off-policy prediction problems would test whether the regularized centering mechanism continues to improve stability outside the diagnostic regime considered here. Second, extending RETD to \(\lambda\)-returns, yielding an RETD\((\lambda)\) family, would connect the regularized centering principle to the broader eligibility-trace machinery used in modern emphatic methods. Third, the regularization parameter \(c\) is currently fixed; an adaptive schedule that increases \(c\) gradually from a small initial value, rather than committing to a single constant, may better track the evolving conditioning of the coupled system and make tuning more principled.
\FloatBarrier
\clearpage



\appendix
\section{Convergence Proof of RETD}
\label{app:convergence_proof}

In this appendix, we provide the detailed convergence proof for RETD. Our proof is based on the Ordinary Differential Equation (ODE) method for stochastic approximation with Markovian noise developed by \citep{liu2025ode}. We first restate the general framework and the convergence result that we will invoke, and then verify that RETD satisfies all the required conditions.

\subsection[ODE framework]{ODE framework from \citet{liu2025ode}}

Consider a stochastic approximation recursion of the form
\begin{equation}
\bm{z}_{n+1}
=
\bm{z}_n + \alpha(n) H(\bm{z}_n, Y_{n+1}),
\qquad n=0,1,\dots,
\label{eq:liu-sa}
\end{equation}
where \(\bm{z}_n \in \mathbb{R}^m\) is the iterate, \(\{\alpha(n)\}\) is a deterministic step-size sequence, and \(\{Y_n\}\) is a Markov chain representing the noise process.

For \(c \ge 1\), define the rescaled update
\[
H_c(\bm{z},y) \doteq \frac{H(c\bm{z},y)}{c}.
\]
Following \citep{liu2025ode}, the convergence analysis relies on the following assumptions.

\begin{assumption}[Markov chain ergodicity]
\label{assum:liu_markov}
The Markov chain \(\{Y_n\}\) has a unique invariant probability measure \(d_Y\).
\end{assumption}

\begin{assumption}[Learning rates]
\label{assum:liu_lr}
The step-size sequence \(\{\alpha(n)\}\) is positive, decreasing, and satisfies
\[
\sum_{n=0}^{\infty}\alpha(n)=\infty,
\qquad
\lim_{n\to\infty}\alpha(n)=0,
\qquad
\left|\frac{\alpha(n)-\alpha(n+1)}{\alpha(n)}\right|=\mathcal{O}(\alpha(n)).
\]
\end{assumption}

\begin{assumption}[Scaled limit structure]
\label{assum:liu_limit_func}
There exist a measurable function \(H_\infty(\bm{z},y)\), a scalar function \(\kappa(c)\) with \(\lim_{c\to\infty}\kappa(c)=0\), and a measurable function \(b(\bm{z},y)\) such that
\[
H_c(\bm{z},y)-H_\infty(\bm{z},y)=\kappa(c)b(\bm{z},y),
\]
and \(b(\bm{z},y)\) is Lipschitz continuous in \(\bm{z}\) with an integrable Lipschitz coefficient.
\end{assumption}

\begin{assumption}[Lipschitz continuity and finite expectations]
\label{assum:liu_lipschitz}
Both \(H(\bm{z},y)\) and \(H_\infty(\bm{z},y)\) are Lipschitz continuous in \(\bm{z}\), with a measurable Lipschitz coefficient \(L(y)\). Moreover,
\[
h(\bm{z}) \doteq \mathbb{E}_{y\sim d_Y}[H(\bm{z},y)],
\qquad
h_\infty(\bm{z}) \doteq \mathbb{E}_{y\sim d_Y}[H_\infty(\bm{z},y)]
\]
are well defined for all \(\bm{z}\), and \(\mathbb{E}_{y\sim d_Y}[L(y)]<\infty\).
\end{assumption}

\begin{assumption}[Global asymptotic stability]
\label{assum:liu_stability}
The rescaled mean field \(h_c(\bm{z}) \doteq h(c\bm{z})/c\) converges to \(h_\infty(\bm{z})\) uniformly on compact subsets as \(c\to\infty\). In addition, the limiting ODE
\[
\dot{\bm{z}}(t)=h_\infty(\bm{z}(t))
\]
has the origin \(\bm{0}\) as a globally asymptotically stable equilibrium.
\end{assumption}

\begin{assumption}[Asymptotic noise vanishing]
\label{assum:liu_noise}
For each relevant function \(g\) appearing in the framework, the cumulative fluctuation of \(g(Y_n)\) around its stationary mean vanishes at the step-size scale almost surely.
\end{assumption}

The following theorem is adapted from Theorem~7 of Liu et al.~(2025).

\begin{theorem}[\citep{liu2025ode}, adapted]
\label{thm:liu_general}
Suppose Assumptions \ref{assum:liu_markov}--\ref{assum:liu_stability} hold, together with the asymptotic noise condition in Assumption \ref{assum:liu_noise}. Then the stochastic approximation iterates generated by \eqref{eq:liu-sa} are stable, i.e.,
\[
\sup_n \|\bm{z}_n\| < \infty
\qquad \text{a.s.}
\]
\end{theorem}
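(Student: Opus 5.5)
The plan is to follow the Borkar--Meyn scaling argument, adapted to Markovian noise as in \citet{liu2025ode}.

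\textbf{Homogeneity of the limit.} From Assumption~\ref{assum:liu_limit_func}, \(H_\infty\) is the pointwise limit of \(H_c\) as \(c\to\infty\). Since \(H_{c}(a\bm{z},y)=aH_{ca}(\bm{z},y)\), letting \(c\to\infty\) gives \(H_\infty(a\bm{z},y)=aH_\infty(\bm{z},y)\) for \(a>0\). Hence \(h_\infty\) is positively homogeneous of degree one.

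\textbf{Uniform contraction time.} Global asymptotic stability of \(\bm{0}\) (Assumption~\ref{assum:liu_stability}), combined with homogeneity and the Lipschitz property (Assumption~\ref{assum:liu_lipschitz}), gives a uniform time \(T>0\) with the following property: every solution of \(\dot{\bm{z}}=h_\infty(\bm{z})\) started in the closed unit ball satisfies \(\|\bm{z}(t)\|\le 1/4\) for all \(t\in[T,T+1]\). Uniformity over the ball comes from compactness and continuous dependence on initial conditions.

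\textbf{Block decomposition and rescaling.} Set \(t_n=\sum_{k<n}\alpha(k)\) and choose indices \(n_k\) with \(t_{n_{k+1}}-t_{n_k}\in[T,T+1]\). On each block, let \(r_k=\max(1,\|\bm{z}_{n_k}\|)\) and define the rescaled iterates \(\hat{\bm{z}}_n=\bm{z}_n/r_k\) for \(n_k\le n\le n_{k+1}\). These satisfy
\[
\hat{\bm{z}}_{n+1}=\hat{\bm{z}}_n+\alpha(n)H_{r_k}(\hat{\bm{z}}_n,Y_{n+1}),
\]
starting in the unit ball.

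\textbf{Gronwall bound on each block.} Lipschitzness of \(H_{r}\) with coefficient \(L(y)\), which is uniform in \(r\ge1\), gives
\[
\|\hat{\bm{z}}_n\|\le C\Bigl(1+\sum\alpha(m)L(Y_{m+1})\Bigr).
\]
The sum runs over the block. By Assumption~\ref{assum:liu_noise} applied to \(g=L\), this sum is asymptotically at most \((T+1)(\mathbb{E}_{d_Y}L+1)\). So \(\sup_k\sup_{n\in\text{block }k}\|\hat{\bm{z}}_n\|<\infty\) almost surely, a bound independent of \(r_k\).

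\textbf{ODE tracking.} Next I would show that the piecewise-linear interpolation of \(\hat{\bm{z}}\) on block \(k\) stays within \(o(1)\) of the solution of \(\dot{\bm{z}}=h_{r_k}(\bm{z})\) from the same initial point. The discrepancy splits into three parts.
\begin{enumerate}
\item A discretisation error, controlled by \(\alpha(n)\to0\) and the Gronwall bound.
\item A Markov noise term \(\sum\alpha(n)\bigl(H_{r_k}(\hat{\bm{z}}_n,Y_{n+1})-h_{r_k}(\hat{\bm{z}}_n)\bigr)\).
\item A comparison term between \(h_{r_k}\) and \(h_\infty\), which vanishes because \(h_{r}\to h_\infty\) uniformly on compacts.
\end{enumerate}

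\textbf{Main obstacle: the noise term.} The difficulty is that the noise term depends on the moving iterate and on the unbounded scale \(r_k\). I would first write \(H_{r_k}=H_\infty+\kappa(r_k)b\). This reduces the noise to fluctuations of the homogeneous \(H_\infty\) plus a \(\kappa(r_k)\)-small term, whose size is again controlled through the integrable Lipschitz coefficient. For the \(H_\infty\) part, I would freeze \(\hat{\bm{z}}\) on sub-blocks of vanishing length, with the freezing error bounded via \(L(Y_n)\). I would then apply Assumption~\ref{assum:liu_noise} to \(g(y)=H_\infty(\bm{z},y)\) at finitely many grid points \(\bm{z}\) covering the bounded set from the Gronwall step. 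The step-size regularity \(|\alpha(n)-\alpha(n+1)|/\alpha(n)=\mathcal{O}(\alpha(n))\) from Assumption~\ref{assum:liu_lr} is what justifies the summation-by-parts needed to pass from unit-weight fluctuations to step-size-weighted ones.

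\textbf{Contradiction.} Suppose \(\sup_n\|\bm{z}_n\|=\infty\) on some event. Then \(r_k\to\infty\) along a subsequence. The tracking step, with \(h_{r_k}\to h_\infty\), gives \(\|\hat{\bm{z}}_{n_{k+1}}\|\le 1/2\) for large \(k\), that is, \(\|\bm{z}_{n_{k+1}}\|\le r_k/2\). So the norm at block starts halves whenever it is large, and the Gronwall step bounds excursions within blocks. Together these force \(\sup_n\|\bm{z}_n\|<\infty\) almost surely, a contradiction.
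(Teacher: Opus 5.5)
Your outline is correct and takes essentially the same approach: the paper gives no proof of its own but imports Theorem~7 of \citet{liu2025ode}, whose proof follows this Borkar--Meyn rescaling over blocks of ODE-time length in \([T,T+1]\), with the Markovian noise controlled by the vanishing asymptotic rate of change in Assumption~\ref{assum:liu_noise} applied to \(L\), \(H_\infty(\bm{z},\cdot)\) and \(b(\bm{z},\cdot)\). Two details to tighten: the within-block discrete Gronwall bound has the form \(\bigl(1+\sum\alpha(m)\|H(\bm{0},Y_{m+1})\|\bigr)\exp\bigl(\sum\alpha(m)L(Y_{m+1})\bigr)\), so the affine term \(H(\bm{0},\cdot)\) must also be controlled; and because that bound is random, take your grid points from a fixed countable dense set so that one null set covers all of them.
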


Once stability is established, the convergence conclusion follows from the corresponding ODE method. The following corollary is adapted from Corollary~8 of \citep{liu2025ode}.

\begin{corollary}[Liu et al.~(2025), adapted]
\label{cor:liu_general}
Under the assumptions of Theorem \ref{thm:liu_general}, the iterates \(\{\bm{z}_n\}\) converge almost surely to a sample-path-dependent bounded invariant set of the ODE
\[
\dot{\bm{z}}(t)=h(\bm{z}(t)).
\]
In particular, if this ODE has a unique globally asymptotically stable equilibrium, then \(\{\bm{z}_n\}\) converges almost surely to that equilibrium.
\end{corollary}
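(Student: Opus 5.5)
The plan is to combine the almost-sure stability from Theorem~\ref{thm:liu_general} with the classical ODE (asymptotic pseudotrajectory) argument, applied pathwise. Fix a sample path on which $\sup_n\|\bm{z}_n\|<\infty$ (a probability-one event by Theorem~\ref{thm:liu_general}), so the iterates stay in a compact ball $B$. Write the update as
\[
\bm{z}_{n+1}=\bm{z}_n+\alpha(n)\bigl(h(\bm{z}_n)+\bm{\epsilon}_{n+1}\bigr),\qquad \bm{\epsilon}_{n+1}\doteq H(\bm{z}_n,Y_{n+1})-h(\bm{z}_n).
\]
Set $t_n=\sum_{k<n}\alpha(k)$ and let $\bar{\bm{z}}(t)$ be the piecewise-linear interpolation of $\{\bm{z}_n\}$ at the times $t_n$. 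By Assumption~\ref{assum:liu_lr}, $t_n\to\infty$ and $\alpha(n)\to0$.

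\emph{Noise control.} The central step is to show that for every $T>0$,
\[
\sup_{n\le k\le m(n,T)}\Bigl\|\sum_{j=n}^{k}\alpha(j)\bm{\epsilon}_{j+1}\Bigr\|\to 0,
\]
where $m(n,T)$ is the last index with $t_k\le t_n+T$. I would handle this in three moves.
\begin{itemize}
\item Fix a finite $\eta$-net $\{\bm{z}^{(i)}\}$ of $B$.
\item For each net point, the term $H(\bm{z}^{(i)},Y_{j+1})-h(\bm{z}^{(i)})$ is a function of the chain alone. Assumption~\ref{assum:liu_noise} applied to $g=H(\bm{z}^{(i)},\cdot)$ makes its weighted partial sums vanish on windows of length $T$.
\item Replacing $\bm{z}_j$ by its nearest net point costs at most $\eta\sum_j\alpha(j)\bigl(L(Y_{j+1})+\mathbb{E}_{d_Y}[L]\bigr)$, using the Lipschitz bound of Assumption~\ref{assum:liu_lipschitz}. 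Applying Assumption~\ref{assum:liu_noise} again with $g=L$ bounds this asymptotically by $2\eta T\,\mathbb{E}_{d_Y}[L]$, which is finite.
\end{itemize}
Letting $\eta\downarrow0$ gives the claim. I expect this step to be the main obstacle. The difficulty is that the Markovian noise is state-dependent, so uniformity over $B$ has to be extracted from noise-vanishing statements about finitely many fixed functions. If the slowly varying step-size condition in Assumption~\ref{assum:liu_lr} is needed, it enters here: it lets us move between $\alpha(j)$ and $\alpha(n)$ within a window (Abel summation).

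\emph{Pseudotrajectory and limit set.} Since $h$ is Lipschitz, being the expectation of a Lipschitz field with integrable coefficient, a Gronwall comparison on each window $[t_n,t_n+T]$ yields
\[
\lim_{s\to\infty}\sup_{t\in[s,s+T]}\|\bar{\bm{z}}(t)-\Phi_{t-s}(\bar{\bm{z}}(s))\|=0,
\]
where $\Phi$ is the flow of $\dot{\bm{z}}=h(\bm{z})$. So $\bar{\bm{z}}$ is a bounded asymptotic pseudotrajectory. By Benaïm's limit-set theorem, its $\omega$-limit set $\mathcal{L}$ is nonempty, compact, invariant and internally chain transitive for $\Phi$. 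This is the claimed sample-path-dependent bounded invariant set, and $\mathrm{dist}(\bm{z}_n,\mathcal{L})\to0$.

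\emph{Unique equilibrium case.} Suppose the ODE has a unique globally asymptotically stable equilibrium $\bm{z}^*$. Take a Lyapunov function $V$ for $\bm{z}^*$; for an affine field with Hurwitz matrix such as RETD, a quadratic $V$ suffices. Every compact invariant set lies in the domain of attraction, and an internally chain transitive set for a flow with a global strict Lyapunov function must lie in the set of equilibria. Hence $\mathcal{L}=\{\bm{z}^*\}$, and $\bm{z}_n\to\bm{z}^*$ almost surely.
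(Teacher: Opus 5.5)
Your architecture is the standard route behind the cited Corollary~8: boundedness from Theorem~\ref{thm:liu_general}, vanishing rate of change of the noise along the iterates, a Gronwall comparison giving a bounded asymptotic pseudotrajectory, Bena\"{\i}m's limit-set theorem, and finally the GAS equilibrium forcing $\mathcal{L}=\{\bm{z}^*\}$. The paper gives no argument beyond the citation to \citet{liu2025ode}, and your limit-set and unique-equilibrium steps are fine. The gap is in the noise-control step you flagged. After you replace $\bm{z}_j$ by its nearest net point $\bm{z}^{(i_j)}$, the term that remains is
\[
\sum_{j=n}^{k}\alpha(j)\bigl[H(\bm{z}^{(i_j)},Y_{j+1})-h(\bm{z}^{(i_j)})\bigr],
\]
and the index $i_j$ depends on the path and changes with $j$. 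Assumption~\ref{assum:liu_noise} for a fixed $g_i=H(\bm{z}^{(i)},\cdot)$ controls only sums over contiguous index blocks, each bounded by twice the window supremum. So it controls this switched sum only if the number of maximal blocks with constant $i_j$ stays bounded as $n\to\infty$. Nearest-point assignment gives no such bound, because $\bm{z}_j$ may cross a cell boundary at every step. With martingale-difference noise this would be harmless, since $i_j$ is predictable and the switched sum is still a martingale. With Markov noise, however, $i_j$ is a function of $Y_{\le j}$ and hence correlated with $Y_{j+1}$, so the switching can be synchronized with the noise and produce a drift of order $\eta T$. The switched sum equals the true noise sum only up to your $O(\eta T)$ Lipschitz error, so bounding it even by $O(\eta T)$ needs an extra argument.

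The missing ingredient is Kushner and Yin's piecewise-constant, finite-valued approximation of the iterate. Split $[t_n,t_n+T]$ into about $T/\Delta$ subintervals of length $\Delta$ and freeze the net index on each, using the net point nearest to the iterate at the start of the subinterval. Alternatively, switch net points only when $\bm{z}_j$ is more than $2\eta$ from the current one. Either way the switched sum becomes $O(T/\Delta)$ (respectively $O(T/\eta)$) fixed-function block sums, and these do vanish.

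Both versions need an increment estimate you have not supplied. On $\{\sup_j\|\bm{z}_j\|\le R\}$ you have $\|\bm{z}_{j+1}-\bm{z}_j\|\le\alpha(j)\bigl(\|H(\bm{0},Y_{j+1})\|+R\,L(Y_{j+1})\bigr)$. Assumption~\ref{assum:liu_noise} applied to $L$ and $\|H(\bm{0},\cdot)\|$ then shows the iterates move $O(\Delta)$ over time $\Delta$, so the path length per window is $O(T)$ and the extra replacement error is $O(\Delta T)$. The same estimate is what gives $\alpha(j)\|H(\bm{z}_j,Y_{j+1})\|\to 0$ and a vanishing discretization error in your Gronwall step. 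That is not automatic here, because the follow-on trace makes $H$ unbounded in $y$.

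For the affine RETD field you can skip the net entirely: sum $\sum_j\alpha(j)\bigl(\overline{\bm{G}}_j-\overline{\bm{G}}_{\mathrm{RETD}}\bigr)\bm{z}_j$ by parts against the increments $\bm{z}_{j+1}-\bm{z}_j$, again using the path-length bound.

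Finally, the null sets in Assumption~\ref{assum:liu_noise} depend on $g$, but your ball and net are chosen after fixing the sample path. Draw the net points from a fixed countable set such as $\mathbb{Q}^m$, so that a single full-measure event covers all of them.
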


\subsection{Application to RETD}

For RETD, define the joint iterate
\[
\bm{z}_t
\doteq
\begin{bmatrix}
\bm{\theta}_t\\
\omega_t
\end{bmatrix}.
\]
Then the RETD recursion can be written in the stochastic approximation form
\begin{equation}
\bm{z}_{t+1}
=
\bm{z}_t + \alpha_t\bigl(\overline{\bm{h}}_t-\overline{\bm{G}}_t\bm{z}_t\bigr),
\label{eq:retd-sa-form}
\end{equation}
where
\[
\overline{\bm{h}}_t
=
\begin{bmatrix}
F_t\rho_t R_{t+1}\bm{\phi}(S_t)\\
F_t\rho_t R_{t+1}
\end{bmatrix},
\qquad
\overline{\bm{G}}_t
=
\begin{bmatrix}
F_t\rho_t \bm{\phi}(S_t)\bigl(\bm{\phi}(S_t)-\gamma\bm{\phi}(S_{t+1})\bigr)^\top & \bm{\phi}(S_t)\\[1mm]
F_t\rho_t \bigl(\bm{\phi}(S_t)-\gamma\bm{\phi}(S_{t+1})\bigr)^\top & 1+c
\end{bmatrix}.
\]

Thus, with \(Y_{t+1}\) denoting the underlying Markovian sample at time \(t+1\), we may write
\[
H(\bm{z}_t,Y_{t+1})=\overline{\bm{h}}_t-\overline{\bm{G}}_t\bm{z}_t.
\]

We now verify the assumptions of Theorem \ref{thm:liu_general}.

\subsubsection{Regularity assumptions}

For a finite state-action MDP, the Markov chain induced by the behavior policy is geometrically ergodic under Assumption \ref{assum:markov} in the main text, and therefore Assumption \ref{assum:liu_markov} holds.

Assumption \ref{assum:liu_lr} follows directly from Assumption \ref{assum:lr} together with standard choices of diminishing step sizes. Since the RETD update is affine in \(\bm{z}\), both \(H(\bm{z},y)\) and its scaled limit are globally Lipschitz in \(\bm{z}\), so Assumptions \ref{assum:liu_limit_func} and \ref{assum:liu_lipschitz} are satisfied.

Finally, because the state-action space is finite and the required stationary averages involving the follow-on trace are well defined under the standing assumptions, the asymptotic noise condition in Assumption \ref{assum:liu_noise} is satisfied by the standard law-of-large-numbers arguments used in the Markovian-noise framework.

\subsubsection{Mean field and limiting ODE}

Taking expectations in \eqref{eq:retd-sa-form} and passing to the limit give the mean field
\[
h(\bm{z})
=
\overline{\bm{h}}_{\mathrm{RETD}}
-
\overline{\bm{G}}_{\mathrm{RETD}}\bm{z},
\]
where
\begin{equation}
\overline{\bm{G}}_{\mathrm{RETD}}
=
\begin{bmatrix}
\bm{\Phi}^\top \bm{F}(\bm{I}-\gamma \bm{P}_\pi)\bm{\Phi} & \bm{\Phi}^\top \bm{d}_\mu\\[1mm]
\bm{d}_\mu^\top \bm{\Phi} & 1+c
\end{bmatrix}.
\label{eq:appendix-retd-key}
\end{equation}
Hence the associated ODE is
\begin{equation}
\dot{\bm{z}}(t)
=
\overline{\bm{h}}_{\mathrm{RETD}}
-
\overline{\bm{G}}_{\mathrm{RETD}}\bm{z}(t).
\label{eq:appendix-ode}
\end{equation}

To verify Assumption \ref{assum:liu_stability}, it remains to show that the drift matrix \(-\overline{\bm{G}}_{\mathrm{RETD}}\) is Hurwitz. Since \(\overline{\bm{G}}_{\mathrm{RETD}}\) is symmetric, it suffices to prove that it is positive definite.

\subsection{Positive definiteness of the RETD matrix}

We first isolate the coupling identity used in the lower-left block of \(\overline{\bm{G}}_{\mathrm{RETD}}\).

\begin{proposition}
\label{prop:coupling}
The emphatic weighting vector satisfies
\[
\bm{f}^\top(\bm{I}-\gamma \bm{P}_\pi)=\bm{d}_\mu^\top.
\]
Equivalently,
\[
\mathbb{E}\!\left[F_t\rho_t\bigl(\bm{\phi}(S_t)-\gamma\bm{\phi}(S_{t+1})\bigr)\right]
=
\bm{d}_\mu^\top\bm{\Phi}.
\]
\end{proposition}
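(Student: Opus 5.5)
The first identity follows directly from the definition $\bm{f}=(\bm{I}-\gamma\bm{P}_\pi^\top)^{-1}\bm{d}_\mu$. Since $\gamma<1$ and $\bm{P}_\pi$ is stochastic, the matrix $\bm{I}-\gamma\bm{P}_\pi^\top$ is invertible, and $(\bm{I}-\gamma\bm{P}_\pi^\top)\bm{f}=\bm{d}_\mu$. Transposing gives $\bm{f}^\top(\bm{I}-\gamma\bm{P}_\pi)=\bm{d}_\mu^\top$. Right-multiplying by $\bm{\Phi}$ then yields $\bm{f}^\top(\bm{I}-\gamma\bm{P}_\pi)\bm{\Phi}=\bm{d}_\mu^\top\bm{\Phi}$, which is the matrix form of the second claim.

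For the expectation form, I will interpret $\mathbb{E}$ as the limit under the stationary regime of the behavior chain, as is done for \eqref{eq:retd-key}, and proceed in three steps.

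\textbf{Step 1 (condition on the current state).} Because $F_t$ depends only on the history up to $S_t$ and $A_{t-1}$, and not on $A_t$ or $S_{t+1}$, the tower property gives
\[
\mathbb{E}\bigl[F_t\rho_t(\bm{\phi}(S_t)-\gamma\bm{\phi}(S_{t+1}))\bigr]=\sum_s \mathbb{E}[F_t\mid S_t=s]\,\Pr(S_t=s)\,\mathbb{E}\bigl[\rho_t(\bm{\phi}(s)-\gamma\bm{\phi}(S_{t+1}))\mid S_t=s\bigr].
\]

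\textbf{Step 2 (importance-sampling correction).} Importance sampling converts the inner expectation into one under $\pi$:
\[
\mathbb{E}\bigl[\rho_t(\bm{\phi}(s)-\gamma\bm{\phi}(S_{t+1}))\mid S_t=s\bigr]=\bm{\phi}(s)^\top-\gamma\sum_{s'}\bm{P}_\pi(s,s')\bm{\phi}(s')^\top,
\]
which is the $s$-th row of $(\bm{I}-\gamma\bm{P}_\pi)\bm{\Phi}$.

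\textbf{Step 3 (the emphatic weighting).} It remains to show that $\lim_t d_\mu(s)\,\mathbb{E}[F_t\mid S_t=s]=f(s)$. This is the standard ETD identity of \citet{sutton2016emphatic}. Writing $\bm{f}_t(s)=d_\mu(s)\mathbb{E}[F_t\mid S_t=s]$ and using the recursion \eqref{etdf} together with the same importance-sampling correction, one obtains
\[
\bm{f}_{t}=\bm{d}_\mu+\gamma\bm{P}_\pi^\top\bm{f}_{t-1}
\]
under stationarity of the states. Since $\gamma\bm{P}_\pi^\top$ is a contraction in a suitable norm, $\bm{f}_t\to\bm{f}$.

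Combining Steps 1--3, the expectation converges to $\bm{f}^\top(\bm{I}-\gamma\bm{P}_\pi)\bm{\Phi}$, and the first identity reduces this to $\bm{d}_\mu^\top\bm{\Phi}$.

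The main obstacle is Step 3. It needs the backward importance-sampling step
\[
\Pr(S_{t-1}=\bar s\mid S_t=s)\,\mathbb{E}[\rho_{t-1}\mid S_{t-1}=\bar s,S_t=s]\,d_\mu(s)=d_\mu(\bar s)\bm{P}_\pi(\bar s,s),
\]
and it needs the limit to be taken properly, since $F_0=1$ is not stationary. I would simply invoke the established ETD result for the limit; the paper's uses of this fact are already in the limiting sense of \eqref{eq:retd-key}.
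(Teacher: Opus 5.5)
Your argument for the matrix identity is exactly the paper's: multiply $\bm{f}=(\bm{I}-\gamma\bm{P}_\pi^\top)^{-1}\bm{d}_\mu$ through by $(\bm{I}-\gamma\bm{P}_\pi^\top)$, transpose, and right-multiply by $\bm{\Phi}$. The paper stops there and treats the expectation form as immediate, implicitly relying on the limiting ETD identity $\lim_t d_\mu(s)\,\mathbb{E}[F_t\mid S_t=s]=f(s)$ behind its key-matrix derivation; your Steps 1--3 (conditioning on $S_t$, the importance-sampling correction, and the recursion $\bm{f}_t=\bm{d}_\mu+\gamma\bm{P}_\pi^\top\bm{f}_{t-1}$) are correct and make that omitted step explicit.
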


\begin{proof}
By definition,
\[
\bm{f}=(\bm{I}-\gamma \bm{P}_\pi^\top)^{-1}\bm{d}_\mu.
\]
Multiplying both sides by \((\bm{I}-\gamma \bm{P}_\pi^\top)\) yields
\[
(\bm{I}-\gamma \bm{P}_\pi^\top)\bm{f}=\bm{d}_\mu.
\]
Transposing gives
\[
\bm{f}^\top(\bm{I}-\gamma \bm{P}_\pi)=\bm{d}_\mu^\top.
\]
Multiplying by \(\bm{\Phi}\) on the right gives the equivalent identity.
\end{proof}

For any \(\bm{z}=[\bm{\theta}^\top,\omega]^\top\), we compute
\begin{align}
\bm{z}^\top \overline{\bm{G}}_{\mathrm{RETD}} \bm{z}
&=
\bm{\theta}^\top \bm{\Phi}^\top \bm{F}(\bm{I}-\gamma \bm{P}_\pi)\bm{\Phi}\bm{\theta}
+
2\omega\,\bm{d}_\mu^\top\bm{\Phi}\bm{\theta}
+
(1+c)\omega^2
\nonumber\\
&=
(1+c)\left(\omega+\frac{\bm{d}_\mu^\top\bm{\Phi}\bm{\theta}}{1+c}\right)^2
+
\bm{\theta}^\top \bm{\Phi}^\top \bm{K}\bm{\Phi}\bm{\theta},
\label{eq:appendix-complete-square}
\end{align}
where
\begin{equation}
\bm{K}
\doteq
\bm{F}(\bm{I}-\gamma \bm{P}_\pi)
-
\frac{1}{1+c}\bm{d}_\mu\bm{d}_\mu^\top.
\label{eq:appendix-K}
\end{equation}

Thus, it suffices to prove that \(\bm{\Phi}^\top\bm{K}\bm{\Phi}\) is positive definite.

First, for any \(i\neq j\),
\[
K_{ij}
=
-\gamma f(i)[P_\pi]_{ij}
-
\frac{1}{1+c}d_\mu(i)d_\mu(j)
\le 0.
\]

Second, the column sum satisfies
\[
\bm{1}^\top\bm{K}
=
\bm{f}^\top(\bm{I}-\gamma\bm{P}_\pi)
-
\frac{1}{1+c}\bm{d}_\mu^\top
=
\frac{c}{1+c}\bm{d}_\mu^\top
>
\bm{0}^\top,
\]
where Proposition \ref{prop:coupling} was used.

Third, the row sum satisfies
\begin{align}
\bm{K}\bm{1}
&=
\bm{F}(\bm{I}-\gamma \bm{P}_\pi)\bm{1}
-
\frac{1}{1+c}\bm{d}_\mu(\bm{d}_\mu^\top\bm{1})
\nonumber\\
&=
(1-\gamma)\bm{f}
-
\frac{1}{1+c}\bm{d}_\mu.
\label{eq:appendix-row-sum}
\end{align}
Using
\[
\bm{f}
=
(\bm{I}-\gamma\bm{P}_\pi^\top)^{-1}\bm{d}_\mu
=
\bm{d}_\mu + \sum_{t=1}^{\infty}(\gamma\bm{P}_\pi^\top)^t\bm{d}_\mu,
\]
we obtain
\[
\bm{K}\bm{1}
=
\left(\frac{c}{1+c}-\gamma\right)\bm{d}_\mu
+
(1-\gamma)\sum_{t=1}^{\infty}(\gamma\bm{P}_\pi^\top)^t\bm{d}_\mu.
\]
Under Assumption \ref{assum:regularization} in the main text,
\[
c \ge \frac{\gamma}{1-\gamma},
\]
which implies
\[
\frac{c}{1+c}-\gamma \ge 0.
\]
Hence \(\bm{K}\bm{1}>\bm{0}\).

Therefore, \(\bm{K}\) has non-positive off-diagonal entries and strictly positive row and column sums, so it is a nonsingular \(M\)-matrix. Consequently, \(\bm{\Phi}^\top\bm{K}\bm{\Phi}\) is positive definite, and \eqref{eq:appendix-complete-square} implies
\[
\overline{\bm{G}}_{\mathrm{RETD}} \succ 0.
\]

It follows that \(-\overline{\bm{G}}_{\mathrm{RETD}}\) is Hurwitz, and therefore the ODE \eqref{eq:appendix-ode} has a unique globally asymptotically stable equilibrium. Thus Assumption \ref{assum:liu_stability} holds for RETD.

\subsection{Conclusion of the proof}

By Theorem \ref{thm:liu_general}, the RETD iterates are almost surely bounded. By Corollary \ref{cor:liu_general}, since the ODE \eqref{eq:appendix-ode} has a unique globally asymptotically stable equilibrium, the joint iterate sequence \(\{\bm{z}_t\}\) converges almost surely to that equilibrium. This completes the proof of Theorem \ref{thm:main_convergence}.
\section{Experimental details}
\label{app:experiment_details}

This appendix records the protocol behind the diagnostics in Section~\ref{sec:experiments} and provides the full coverage and reproducibility material that the main text deliberately omits. It is structured as: (i) environments and plotted methods, (ii) learning rates and regularization parameters, (iii) statistics shown in the plots, (iv) computing resources, (v) full coverage at the main-text stepsize, (vi) additional learning-rate slices, (vii) RETD robustness scans, and (viii) a structural reading of large \(c\). The code and regenerated figures are contained in the accompanying experiment-results directory; algorithm-comparison curves are read from \texttt{results\_b/}, with no-legend copies under \texttt{paper\_figures/} so the main text and appendix can share the legend in Figure~\ref{fig:main_algorithm_comparison}, and RETD scans are read from \texttt{results\_a/}.

\subsection{Environments and plotted methods}

\paragraph{Random walk.}
The random-walk prediction task uses three feature representations: tabular, inverted, and dependent. For each representation, the comparison includes TD, GTD2, TDC, TDRC, ETD, TETD, CETD, and RETD, with \(10\) independent runs and \(5000\) update steps.

\paragraph{Boyan chain.}
The Boyan chain experiment follows~\citet{boyan1999least}. The plotted methods are TD, GTD2, TDC, TDRC, ETD, TETD, and RETD; CETD is not included for this environment. Stored runs use \(10\) independent repetitions and \(5000\) update steps.

\paragraph{Two-state counterexample.}
The two-state counterexample evaluated in Tables~\ref{tab:tail_rmse}--\ref{tab:divergence_rate} and Figure~\ref{fig:appendix_alpha_0p01} is the classical Sutton off-policy two-state task: states \(\{s_1,s_2\}\) with scalar features \(\phi(s_1)=1,\phi(s_2)=2\), uniform behavior over the two actions, target policy that always picks the action moving the chain to \(s_2\), zero reward, and \(\gamma=0.9\); the comparison includes TD, GTD2, TDC, TDRC, ETD, TETD, CETD, and RETD with \(10\) repetitions and \(5000\) update steps. The CETD construction discussed around Section~\ref{sec:cetd} uses a separate two-state instance with biased behavior probabilities to expose the centered emphatic positivity failure analytically; we keep the two roles distinct in the text and report only the Sutton task in the algorithm-comparison tables.

\paragraph{Baird's seven-state counterexample.}
Baird's counterexample uses the seven-state feature construction, zero reward, and discount \(\gamma=0.99\). The comparison includes TD, GTD2, TDC, TDRC, CETD, and RETD with \(10\) repetitions and \(10000\) update steps. ETD and TETD are not plotted because, at the main-text stepsize, their stored trajectories are numerically unusable; they are reported as \textsc{Div.} in Table~\ref{tab:divergence_rate} using \(50\) independent repetitions to make the divergence statistic stable. Because \(\gamma=0.99\) makes the conservative sufficient bound \(c\geq\gamma/(1-\gamma)=99\), the Baird experiment should be read as an empirical fixed-stepsize stress test rather than as an invocation of the bound.

\paragraph{New two-state prediction task.}
The new two-state task has two states and two actions. The target policy always selects action~1; the behavior policy is uniform. From state~0, action~0 keeps the process in state~0 with reward~0 and action~1 moves to state~1 with reward~0; from state~1, action~0 moves to state~0 with reward~0 and action~1 stays in state~1 with reward~1. We use \(\gamma=0.9\) and one-dimensional scalar features \(\phi(s_1)=1\), \(\phi(s_2)=0.6\), i.e.\ \(\bm{\Phi}=[1,\,0.6]^\top\). The comparison includes TD, GTD2, TDC, TDRC, ETD, TETD, CETD, and RETD with \(10\) repetitions and \(5000\) update steps. This task is the controlled CETD-vs-RETD setting reported in the main text.

\paragraph{TD key-matrix analysis: why TD diverges on the Sutton two-state task but converges on the new two-state task.}
Both tasks use one-dimensional scalar features, so the off-policy TD key matrix
\(\bm{A}_{\mathrm{TD}}=\bm{\Phi}^\top\bm{D}_\mu(\bm{I}-\gamma\bm{P}_\pi)\bm{\Phi}\)
is a single scalar. Its sign decides everything: if \(\bm{A}_{\mathrm{TD}}>0\), the mean-field ODE \(\dot\theta=b_{\mathrm{TD}}-\bm{A}_{\mathrm{TD}}\theta\) is stable; if \(\bm{A}_{\mathrm{TD}}<0\), it is unstable.

For both tasks, \(\bm{P}_\pi=\bigl[\!\begin{smallmatrix}0&1\\0&1\end{smallmatrix}\!\bigr]\), \(\bm{D}_\mu=\bigl[\!\begin{smallmatrix}0.5&0\\0&0.5\end{smallmatrix}\!\bigr]\), and \(\gamma=0.9\), so
\(\bm{D}_\mu(\bm{I}-\gamma\bm{P}_\pi)=\bigl[\!\begin{smallmatrix}0.5&-0.45\\0&0.05\end{smallmatrix}\!\bigr]\). The two tasks differ only in the scalar feature vector \(\bm{\Phi}\).

\emph{Sutton two-state task.} \(\bm{\Phi}=[1,\,2]^\top\):
\begin{equation*}
\bm{A}_{\mathrm{TD}}
=
\begin{bmatrix}1&2\end{bmatrix}
\begin{bmatrix}0.5&-0.45\\0&0.05\end{bmatrix}
\begin{bmatrix}1\\2\end{bmatrix}
=\begin{bmatrix}0.5&-0.35\end{bmatrix}\begin{bmatrix}1\\2\end{bmatrix}
=0.5-0.7=-0.2.
\end{equation*}
\(\bm{A}_{\mathrm{TD}}=-0.2<0\), so the ODE is unstable: \(\dot\theta=0.2\,\theta\) and any nonzero initialization grows exponentially. Simulation matches: at \(\alpha=0.01\), \(\theta_0=1\), \(5000\) steps, \(|\theta|\) reaches order \(10^4\), consistent with the divergence statistic in Table~\ref{tab:divergence_rate} and the unbounded TD curve in Figure~\ref{fig:appendix_alpha_0p01}(e). This is the classical structural instability that motivated emphatic and gradient-TD methods.

\emph{New two-state task.} \(\bm{\Phi}=[1,\,0.6]^\top\):
\begin{equation*}
\bm{A}_{\mathrm{TD}}
=
\begin{bmatrix}1&0.6\end{bmatrix}
\begin{bmatrix}0.5&-0.45\\0&0.05\end{bmatrix}
\begin{bmatrix}1\\0.6\end{bmatrix}
=\begin{bmatrix}0.5&-0.42\end{bmatrix}\begin{bmatrix}1\\0.6\end{bmatrix}
=0.5-0.252=0.248.
\end{equation*}
\(\bm{A}_{\mathrm{TD}}=0.248>0\), so the ODE \(\dot\theta=b_{\mathrm{TD}}-0.248\,\theta\) is stable with a unique equilibrium \(\theta^\star=b_{\mathrm{TD}}/\bm{A}_{\mathrm{TD}}=0.3/0.248\approx 1.21\). Because the scalar feature \(\bm{\Phi}=[1,0.6]^\top\) cannot represent \(\bm{v}_\pi=[9,10]^\top\), the asymptotic RMSE of TD is the projection error \(\sqrt{\tfrac12\sum_s(\phi(s)\theta^\star-v_\pi(s))^2}\approx 8.56\), not zero. Simulation at \(\alpha=0.01\) confirms this: TD trajectories remain bounded and converge to this projected fixed point, in agreement with the tail RMSE \(8.573\) in Table~\ref{tab:tail_rmse} and the smooth TD curve in Figure~\ref{fig:appendix_alpha_0p01}(g).

\emph{What the contrast isolates.} The two tasks share the same behavior policy, target policy, transition kernel, and discount factor; they differ only in the scalar feature value at state~2 (\(2\) vs.\ \(0.6\)) and in reward. Yet \(\bm{A}_{\mathrm{TD}}\) flips sign: \(-0.2\) on the Sutton task, \(+0.248\) on the new task. Off-policy TD divergence on the Sutton task is therefore a \emph{mean-field} phenomenon visible already at the sign of \(\bm{A}_{\mathrm{TD}}\), not a stochastic-noise artifact, and is exactly what classical emphatic and gradient-TD methods are designed to repair. The new two-state task removes this mean-field obstacle by construction (\(\bm{A}_{\mathrm{TD}}>0\)): TD itself is stable, so the comparison between CETD and RETD reported in the main text and in Tables~\ref{tab:tail_rmse}--\ref{tab:divergence_rate} isolates the effect of regularizing the auxiliary centering channel rather than confounding it with TD-style off-policy divergence. This is also why CETD's failure on this task (tail RMSE \(43.73\), max RMSE \(64.92\), markedly worse than the \(\approx 8.5\) projection-limited baselines) is informative: it cannot be blamed on TD-level instability and must come from the centered emphatic coupling itself, exactly the failure mode that RETD repairs.

\paragraph{ETD, CETD, and RETD key matrices on the same two two-state tasks.}
We extend the TD-level analysis to the three emphatic methods. ETD has a scalar key matrix \(\bm{A}_{\mathrm{ETD}}=\bm{\Phi}^\top\bm{F}(\bm{I}-\gamma\bm{P}_\pi)\bm{\Phi}\), where \(\bm{f}=(\bm{I}-\gamma\bm{P}_\pi^\top)^{-1}\bm{d}_\mu\) is the emphatic distribution and \(\bm{F}=\mathrm{diag}(\bm{f})\). CETD and RETD instead track an auxiliary scalar \(\omega\), so their key matrices live in \(\mathbb{R}^{2\times 2}\):
\begin{equation*}
\overline{\bm{G}}_{\mathrm{CETD}}
=\begin{bmatrix}
\bm{A}_{\mathrm{ETD}} & \bm{\Phi}^\top\bm{d}_\mu\\
\bm{d}_\mu^\top\bm{\Phi} & 1
\end{bmatrix},
\qquad
\overline{\bm{G}}_{\mathrm{RETD}}
=\begin{bmatrix}
\bm{A}_{\mathrm{ETD}} & \bm{\Phi}^\top\bm{d}_\mu\\
\bm{d}_\mu^\top\bm{\Phi} & 1+c
\end{bmatrix}.
\end{equation*}
Both are symmetric (using the emphatic identity \(\bm{f}^\top(\bm{I}-\gamma\bm{P}_\pi)=\bm{d}_\mu^\top\), see Appendix~\ref{app:convergence_proof}); positive definiteness is therefore a leading-principal-minor check, and \(\overline{\bm{G}}_{\mathrm{RETD}}\succ 0\) is equivalent to \(-\overline{\bm{G}}_{\mathrm{RETD}}\) being Hurwitz, i.e.\ to mean-field ODE stability. Both tasks share \(\bm{P}_\mu=\bigl[\!\begin{smallmatrix}0.5&0.5\\0.5&0.5\end{smallmatrix}\!\bigr]\), \(\bm{P}_\pi=\bigl[\!\begin{smallmatrix}0&1\\0&1\end{smallmatrix}\!\bigr]\), \(\gamma=0.9\), so \(\bm{d}_\mu=[0.5,0.5]^\top\) and \(\bm{f}=[0.5,9.5]^\top\). The only thing that changes across tasks is \(\bm{\Phi}\), which feeds three structural quantities: the ETD key \(\bm{A}_{\mathrm{ETD}}\) (top-left of \(\overline{\bm{G}}\)), the off-diagonal coupling \(\bm{\Phi}^\top\bm{d}_\mu\), and through them \(\det\overline{\bm{G}}_{\mathrm{CETD}}=\bm{A}_{\mathrm{ETD}}-(\bm{\Phi}^\top\bm{d}_\mu)^2\) and the threshold
\(c_{\min}=(\bm{\Phi}^\top\bm{d}_\mu)^2/\bm{A}_{\mathrm{ETD}}-1\) above which RETD becomes PD. Table~\ref{tab:two_state_keymatrices} reports all five quantities on both tasks together with the corresponding empirical outcome at the experimental \(\alpha=0.01\) and \(c=9\).

\begin{table}[t]
\centering
\footnotesize
\caption{Key-matrix structure and empirical outcome of TD, ETD, CETD, and RETD on the two two-state tasks. Both tasks use uniform behavior \(\bm{d}_\mu=[0.5,0.5]^\top\), target \(\bm{P}_\pi=\bigl[\!\begin{smallmatrix}0&1\\0&1\end{smallmatrix}\!\bigr]\), \(\gamma=0.9\), and emphatic weighting \(\bm{f}=[0.5,9.5]^\top\); they differ only in the scalar feature \(\bm{\Phi}\) at state~2 and in reward. \textbf{Mean-field stable?} marks ODE stability (TD: sign of \(\bm{A}_{\mathrm{TD}}\); ETD: sign of \(\bm{A}_{\mathrm{ETD}}\); CETD/RETD: \(\overline{\bm{G}}\succ 0\)). RETD uses the experimental \(c=9\); \(c_{\min}\) is the smallest \(c\) for which \(\overline{\bm{G}}_{\mathrm{RETD}}\succ 0\). Empirical statistics are over \(10\) seeds, \(5000\) steps; \emph{Div.\ }denotes at least one diverged seed. Values matched against Tables~\ref{tab:tail_rmse}--\ref{tab:divergence_rate}.}
\label{tab:two_state_keymatrices}
\setlength{\tabcolsep}{3.0pt}
\resizebox{\linewidth}{!}{%
\begin{tabular}{@{}llccccc@{}}
\toprule
Task & Method & Key matrix & \(\det\) and eigenvalues & \(c_{\min}\) & Tail RMSE & Max RMSE \\
\midrule
\multirow{4}{*}{\shortstack[l]{Sutton\\ \(\bm{\Phi}=[1,2]^\top\)}}
 & TD   & \(\bm{A}_{\mathrm{TD}}=-0.2\)  & \(-0.2\) (unstable) & --- & \textsc{Div.} & \(7.10\!\times\!10^{4}\) \\
 & ETD  & \(\bm{A}_{\mathrm{ETD}}=3.4\)   & \(3.4\) (stable)    & --- & \(1.07\!\times\!10^{-19}\) & \(1.720\) \\
 & CETD & \(\bigl[\!\begin{smallmatrix}3.4&1.5\\1.5&1\end{smallmatrix}\!\bigr]\)  & \(\det=1.15\); \(\{4.12,0.28\}\) (PD)  & --- & \(7.01\!\times\!10^{-5}\) & \(2.043\) \\
 & RETD\((c{=}9)\) & \(\bigl[\!\begin{smallmatrix}3.4&1.5\\1.5&10\end{smallmatrix}\!\bigr]\) & \(\det=31.75\); \(\{10.32,3.08\}\) (PD) & \(-0.338\) & \(\mathbf{1.63\!\times\!10^{-21}}\) & \(1.754\) \\
\midrule
\multirow{4}{*}{\shortstack[l]{New\\ \(\bm{\Phi}=[1,0.6]^\top\)}}
 & TD   & \(\bm{A}_{\mathrm{TD}}=+0.248\) & \(+0.248\) (stable) & --- & \(8.573\) & \(8.802\) \\
 & ETD  & \(\bm{A}_{\mathrm{ETD}}=0.572\) & \(0.572\) (stable)  & --- & \(5.015\) & \(3.89\!\times\!10^{3}\,\)\textsc{Div.} \\
 & CETD & \(\bigl[\!\begin{smallmatrix}0.572&0.8\\0.8&1\end{smallmatrix}\!\bigr]\) & \(\det=-0.068\); \(\{1.61,-0.04\}\) (\textbf{not PD}) & --- & \(43.73\) & \(64.92\) \\
 & RETD\((c{=}9)\) & \(\bigl[\!\begin{smallmatrix}0.572&0.8\\0.8&10\end{smallmatrix}\!\bigr]\) & \(\det=5.08\); \(\{10.07,0.50\}\) (PD) & \(\mathbf{+0.119}\) & \(\mathbf{4.131}\) & \(\mathbf{8.758}\) \\
\bottomrule
\end{tabular}%
}
\end{table}

The table tells the structural story in three lines. \emph{First}, the emphatic weighting \(\bm{F}=\mathrm{diag}(0.5,9.5)\) is what flips Sutton's TD key from \(-0.2\) to an ETD key of \(+3.4\): heavy emphasis on the absorbing state \(s_2\) repairs the off-policy mean-field instability that breaks TD, and the \(2\times 2\) extensions \(\overline{\bm{G}}_{\mathrm{CETD}}\) and \(\overline{\bm{G}}_{\mathrm{RETD}}\) inherit a comfortably positive top-left, large enough that the off-diagonal \(1.5\) cannot drag the determinant negative (\(\det=1.15>0\), and any \(c\geq 0\) keeps RETD PD since \(c_{\min}=-0.338\)). All three emphatic methods therefore converge on Sutton's task while TD diverges. \emph{Second}, the same emphatic weighting that helps Sutton produces the new failure on the new two-state task: the smaller feature gap \(\bm{\Phi}=[1,0.6]^\top\) drops the ETD key to a much smaller \(0.572\) while the off-diagonal only mildly shrinks to \(0.8\), so \((\bm{\Phi}^\top\bm{d}_\mu)^2=0.64\) overpowers \(\bm{A}_{\mathrm{ETD}}=0.572\) and \(\det\overline{\bm{G}}_{\mathrm{CETD}}=-0.068<0\). The eigenvalues \(\{1.61,-0.04\}\) certify that \(-\overline{\bm{G}}_{\mathrm{CETD}}\) is not Hurwitz; the unstable mode along the negative eigenvector is exactly what causes the empirical CETD trajectories to drift to tail RMSE \(43.73\) and max RMSE \(64.92\). \emph{Third}, RETD repairs this by lifting the lower-right entry from \(1\) to \(1+c\), pushing the determinant to \(0.572(1+c)-0.64\); the threshold \(c_{\min}\!\approx\!0.119\) is small, so the experimental \(c=9\) sits far inside the PD region (\(\det=5.08\)), and RETD's empirical tail RMSE \(4.131\) is the lowest among all eight methods compared in Table~\ref{tab:tail_rmse}, including all unbiased and emphatic baselines. The remaining ETD anomaly on the new task (PD mean-field but \(1/10\) seed diverged with max RMSE \(3.89\!\times\!10^{3}\)) is a stochastic-variance phenomenon, not a mean-field one: the follow-on trace \(M_t\) inherits the high-variance product \(\rho_t M_{t-1}\) without any centering or regularization, and Table~\ref{tab:divergence_rate} flags this as the only emphatic-family failure on this task.

\subsection{Learning rates and regularization parameters}

The main text reports the common comparison at \(\alpha=0.01\). The stored algorithm-comparison sweeps cover multiple stepsizes: random walk, Boyan chain, and Baird use \(\{0.001,0.005,0.01,0.05,0.1\}\) where the corresponding result files are available, while the two-state counterexample and the new two-state prediction task use \(\{0.001,0.005,0.01,0.05\}\), with extra very small stepsizes for some diagnostic plots.

For RETD, the main comparison uses an environment-level rule that keeps \(c\) inside the stable region observed in the \(c\)-scan while avoiding values so large that the method visually collapses to ETD: \(c=9\) for random walk, Boyan chain, the original two-state counterexample, and the new two-state task, and \(c=0.5\) for Baird. These are practical fixed-stepsize choices and are not intended to satisfy the conservative sufficient bound in Assumption~\ref{assum:regularization}. In the code convention used throughout the paper, the auxiliary recursion contains the factor \((1+c)\omega_t\), and CETD is the special case \(c=0\).

\subsection{Statistics shown in the plots}

Each stored \texttt{.npy} file has shape \((N,T+1)\), where \(N\) is the number of independent runs and \(T\) is the number of update steps. For each algorithm and environment, the plotted line is the cross-run mean RMSE at the displayed time points and the shaded region is one empirical standard deviation. To keep the PDFs readable, each curve is uniformly subsampled to roughly 120 points; no temporal smoothing is applied. The y-axis is RMSE against the true value function in random walk, Boyan chain, the original two-state task, and the new two-state task, and against the zero target in Baird's zero-reward counterexample. The tail-average RMSE in Table~\ref{tab:tail_rmse} averages the final 10\% of training steps per run before averaging across runs. A run is counted as diverged in Table~\ref{tab:divergence_rate} if its RMSE trajectory contains NaN/Inf or its maximum RMSE exceeds \(10^3\); this conservative threshold is far above the normal RMSE range of the suite and is used only to flag numerically unusable trajectories.

\subsection{Computing resources}

All reported experiments are lightweight CPU-only NumPy simulations; no GPU or accelerator is required. The individual algorithm-comparison runs store arrays of at most a few dozen trajectories with at most \(10000+1\) time points, so a single Python process uses well below 1GB of memory in our runs. On a standard modern laptop or desktop CPU, an individual environment--algorithm--stepsize configuration completes in seconds to a few minutes depending on the horizon and number of seeds, and regenerating the full set of stored curves, tables, and parameter scans is expected to take hours rather than days. The plotting and table-generation scripts are negligible compared with the simulation time and require only the stored \texttt{.npy} arrays.

\subsection{Full coverage at the main-text stepsize}

Figure~\ref{fig:appendix_alpha_0p01} reports the complete seven-environment algorithm comparison at \(\alpha=0.01\). The main text retains only the Boyan-chain and Baird panels because they isolate the geometry and stability diagnostics; this figure shows that the qualitative ordering between RETD and the baselines is consistent across the remaining environments, so the main-text selection is not the result of cherry-picked panels.

\begin{figure}[p]
\centering
\includegraphics[width=0.62\linewidth]{shared_algorithm_legend.pdf}
\vspace{0.25em}

\begin{subfigure}[t]{0.46\linewidth}\centering\includegraphics[width=\linewidth]{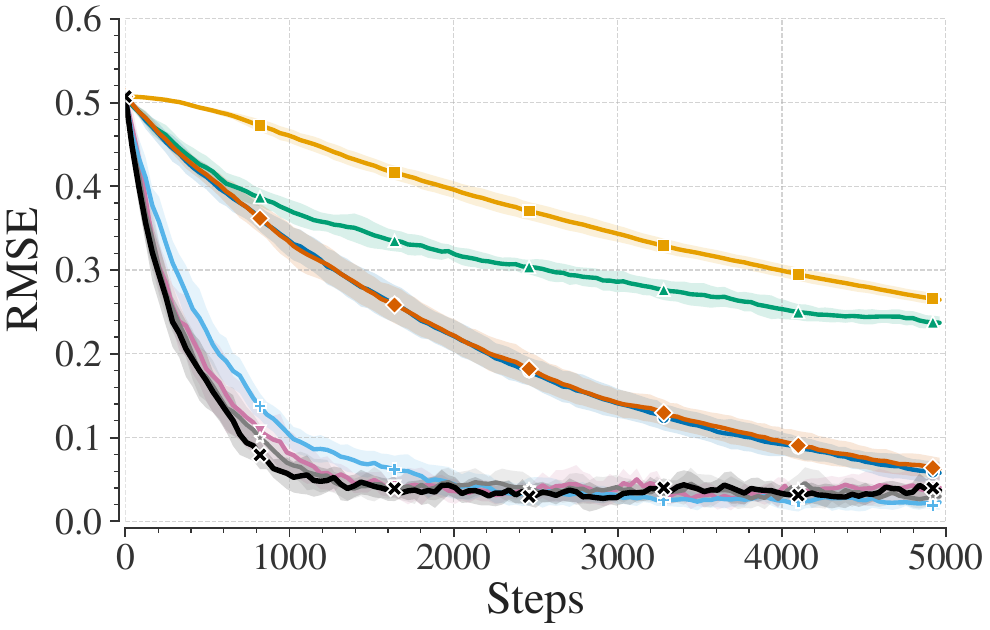}\caption{RW tabular}\end{subfigure}
\begin{subfigure}[t]{0.46\linewidth}\centering\includegraphics[width=\linewidth]{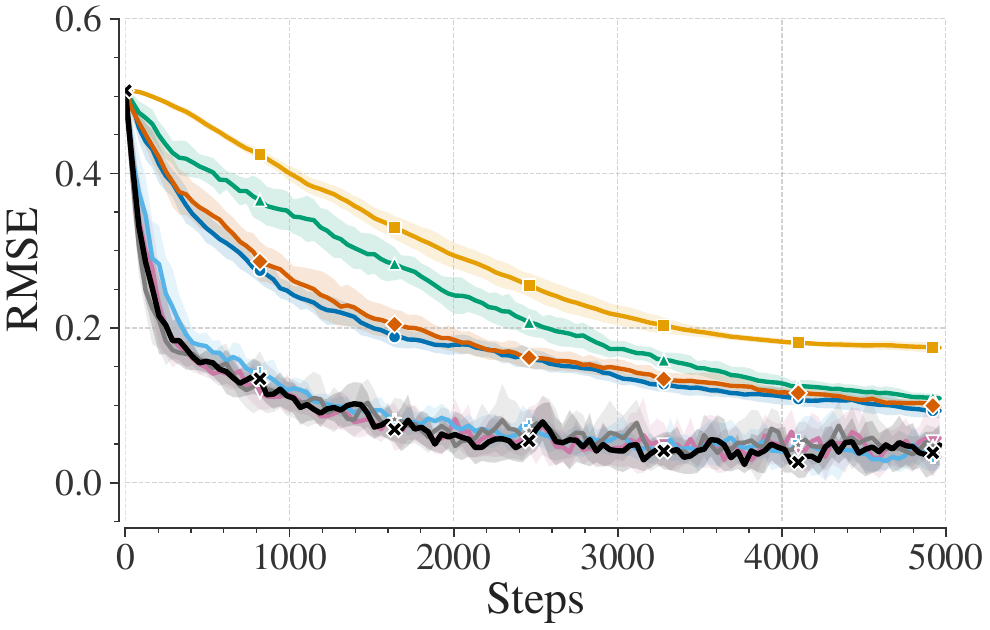}\caption{RW inverted}\end{subfigure}
\begin{subfigure}[t]{0.46\linewidth}\centering\includegraphics[width=\linewidth]{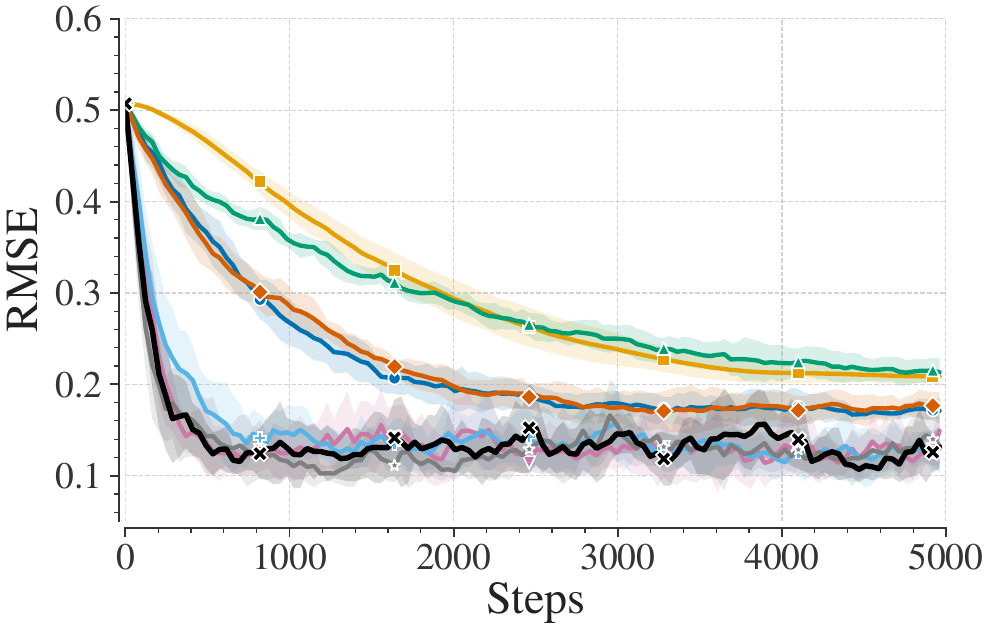}\caption{RW dependent}\end{subfigure}
\begin{subfigure}[t]{0.46\linewidth}\centering\includegraphics[width=\linewidth]{boyanchain_alpha_0p01.pdf}\caption{Boyan chain}\end{subfigure}
\begin{subfigure}[t]{0.46\linewidth}\centering\includegraphics[width=\linewidth]{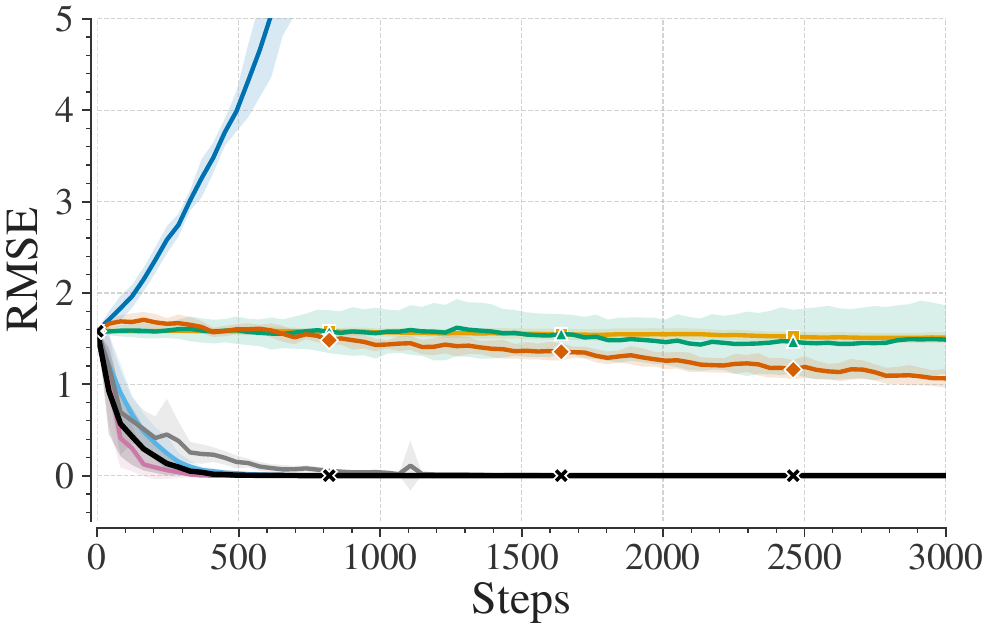}\caption{Two-state}\end{subfigure}
\begin{subfigure}[t]{0.46\linewidth}\centering\includegraphics[width=\linewidth]{baird_7state_alpha_0p01.pdf}\caption{Baird}\end{subfigure}
\begin{subfigure}[t]{0.46\linewidth}\centering\includegraphics[width=\linewidth]{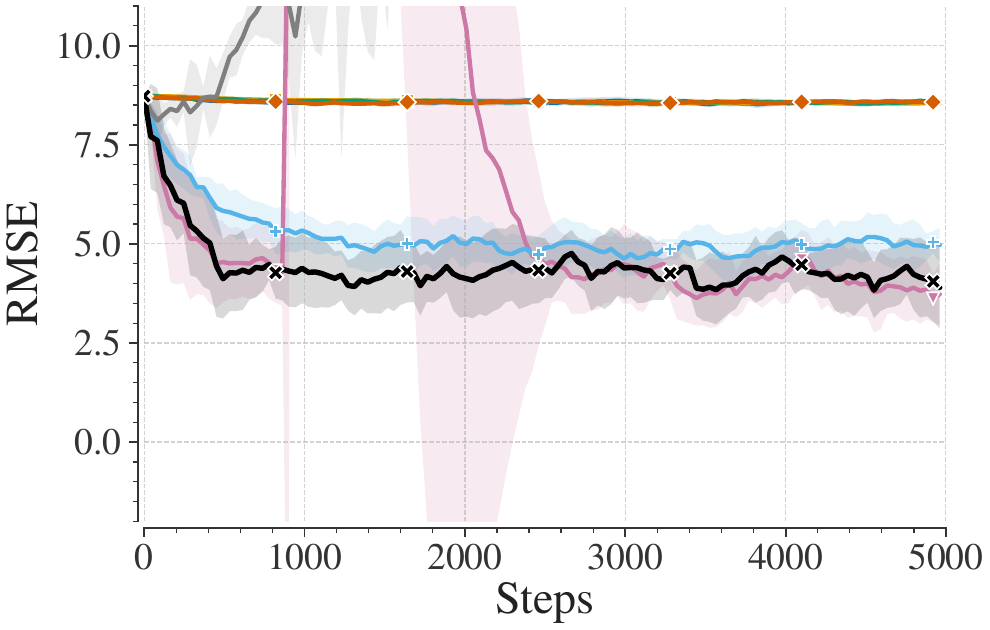}\caption{New two-state}\end{subfigure}
\caption{Complete algorithm comparisons at \(\alpha=0.01\) across all seven environments. The main text retains only the Boyan-chain and Baird panels; the remaining panels are reported here so that the main-text selection can be verified against the full coverage.}
\label{fig:appendix_alpha_0p01}
\end{figure}

\subsection{Additional learning-rate slices}

Figures~\ref{fig:appendix_alpha_0p005} and~\ref{fig:appendix_alpha_0p05} report two additional learning-rate slices, one smaller and one larger than the main-text value. They are intended to verify that the conclusions at \(\alpha=0.01\) are not specific to a single stepsize: at \(\alpha=0.005\) the differences between methods are compressed but their ordering is preserved, while at \(\alpha=0.05\) the more aggressive setting amplifies the sensitivity of emphatic-trace methods and makes the stability gap between RETD and the naive emphatic methods more visible.

\begin{figure}[p]
\centering
\begin{subfigure}[t]{0.46\linewidth}\centering\includegraphics[width=\linewidth]{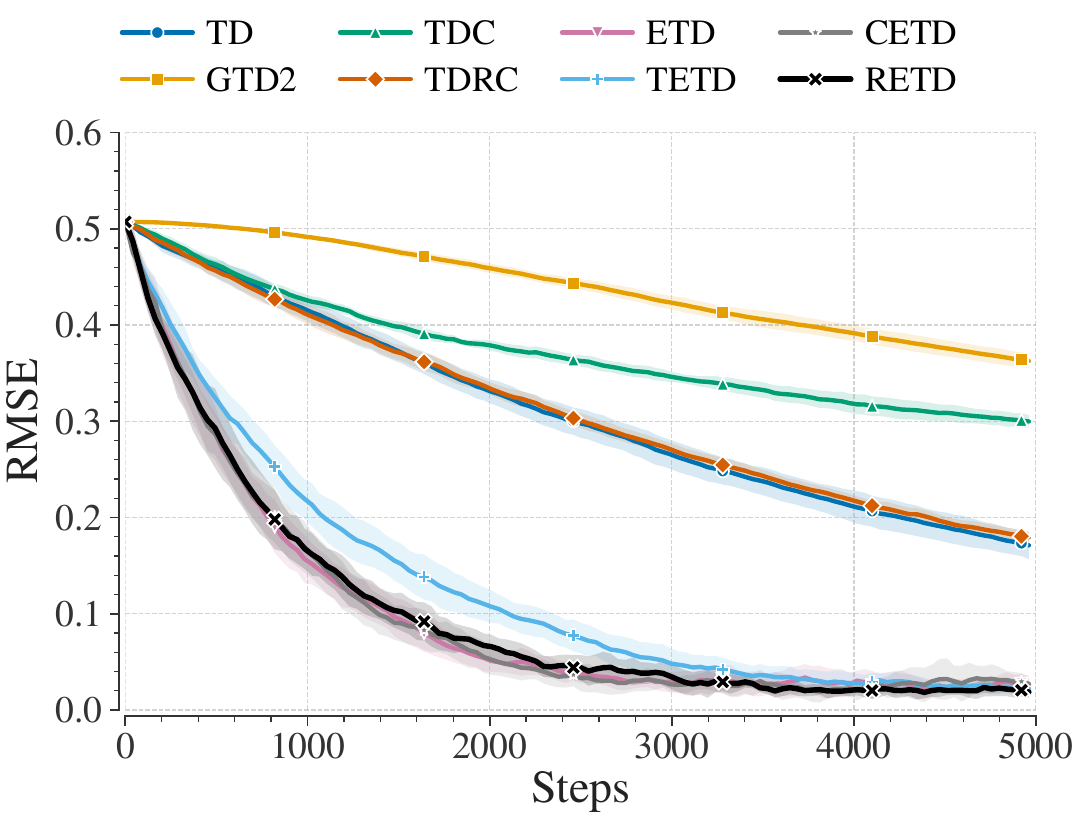}\caption{RW tabular}\end{subfigure}
\begin{subfigure}[t]{0.46\linewidth}\centering\includegraphics[width=\linewidth]{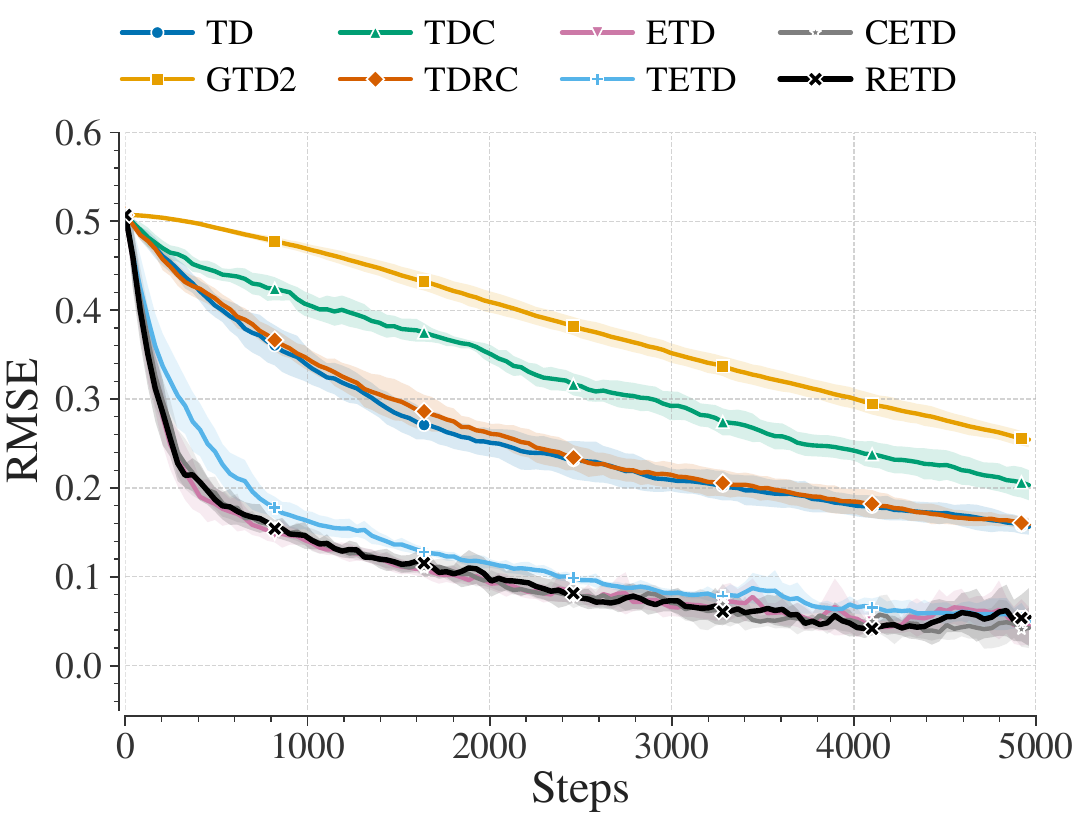}\caption{RW inverted}\end{subfigure}
\begin{subfigure}[t]{0.46\linewidth}\centering\includegraphics[width=\linewidth]{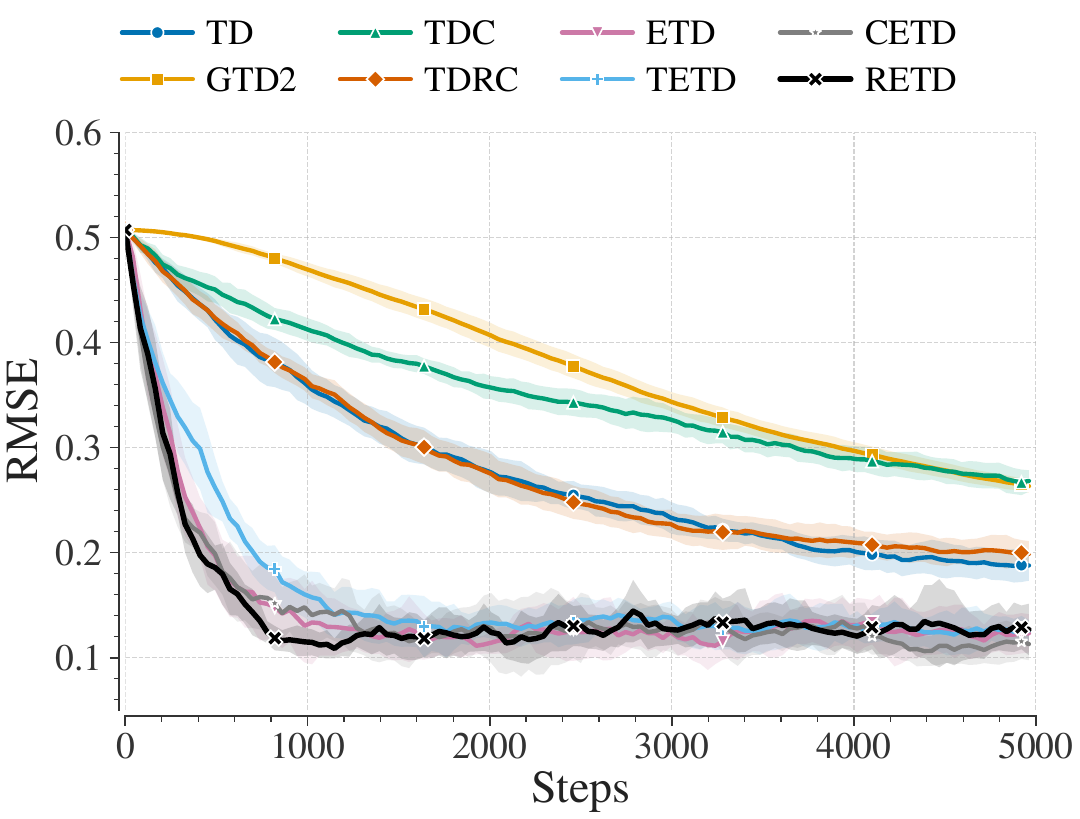}\caption{RW dependent}\end{subfigure}
\begin{subfigure}[t]{0.46\linewidth}\centering\includegraphics[width=\linewidth]{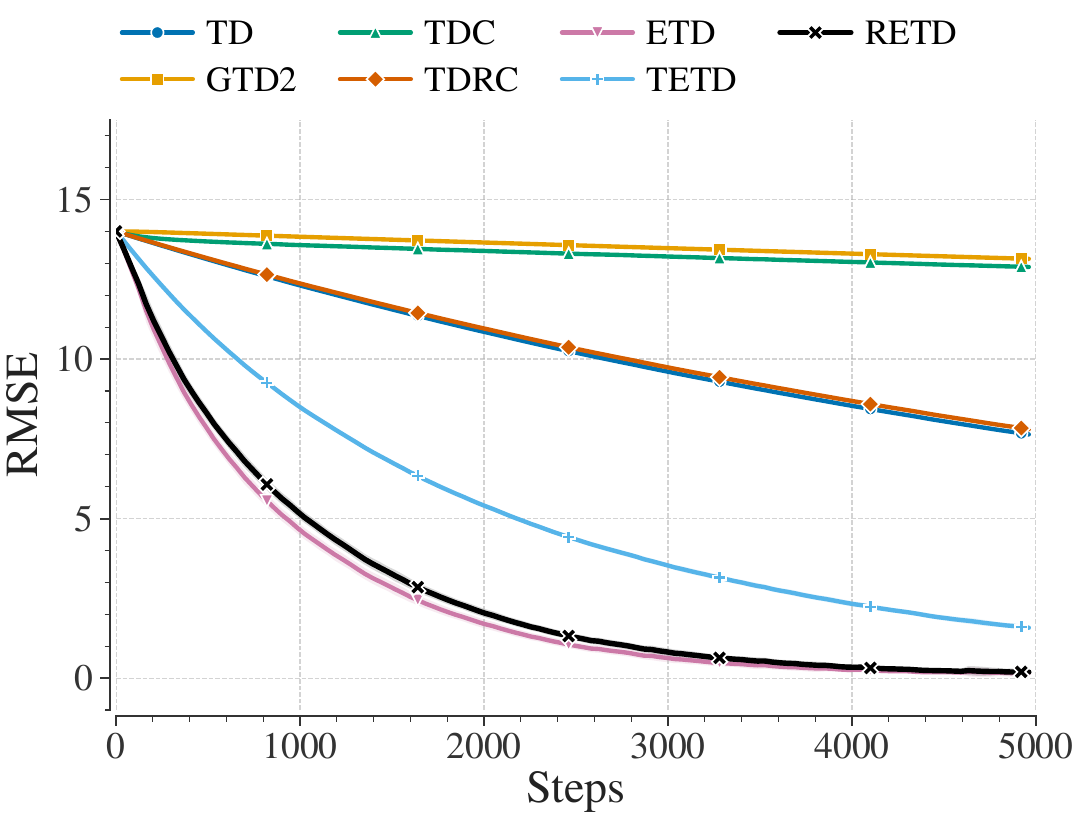}\caption{Boyan chain}\end{subfigure}
\begin{subfigure}[t]{0.46\linewidth}\centering\includegraphics[width=\linewidth]{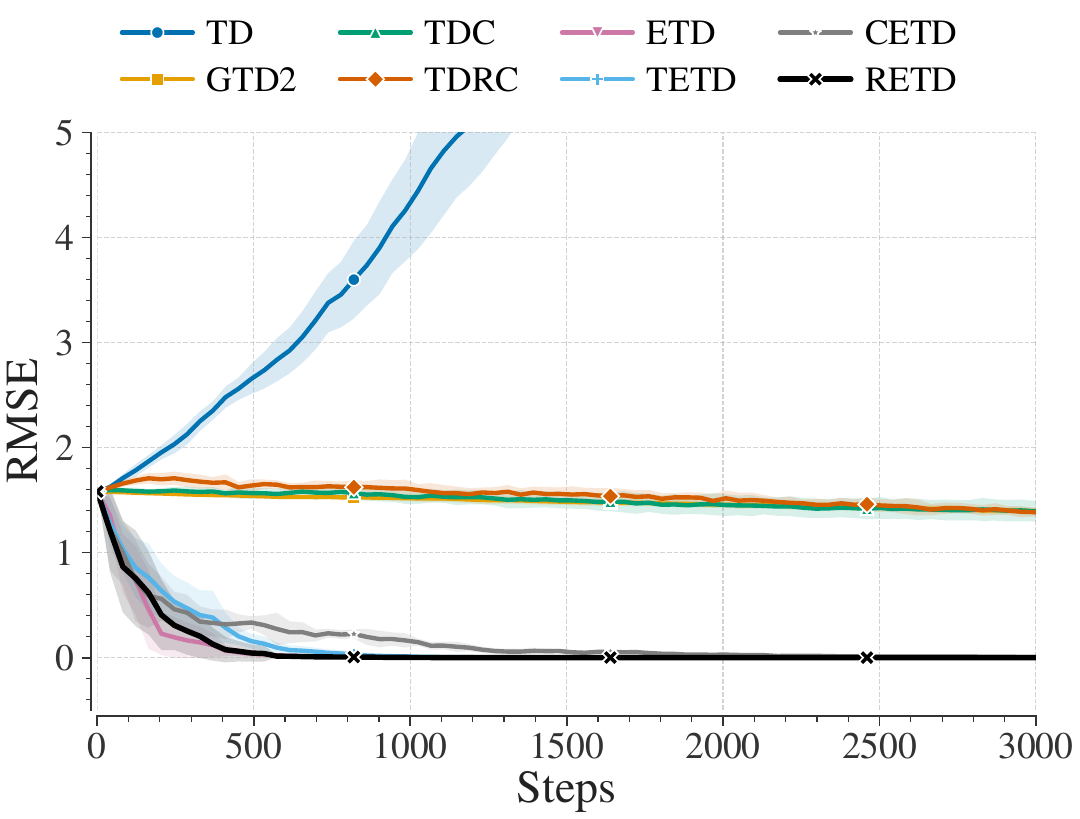}\caption{Two-state}\end{subfigure}
\begin{subfigure}[t]{0.46\linewidth}\centering\includegraphics[width=\linewidth]{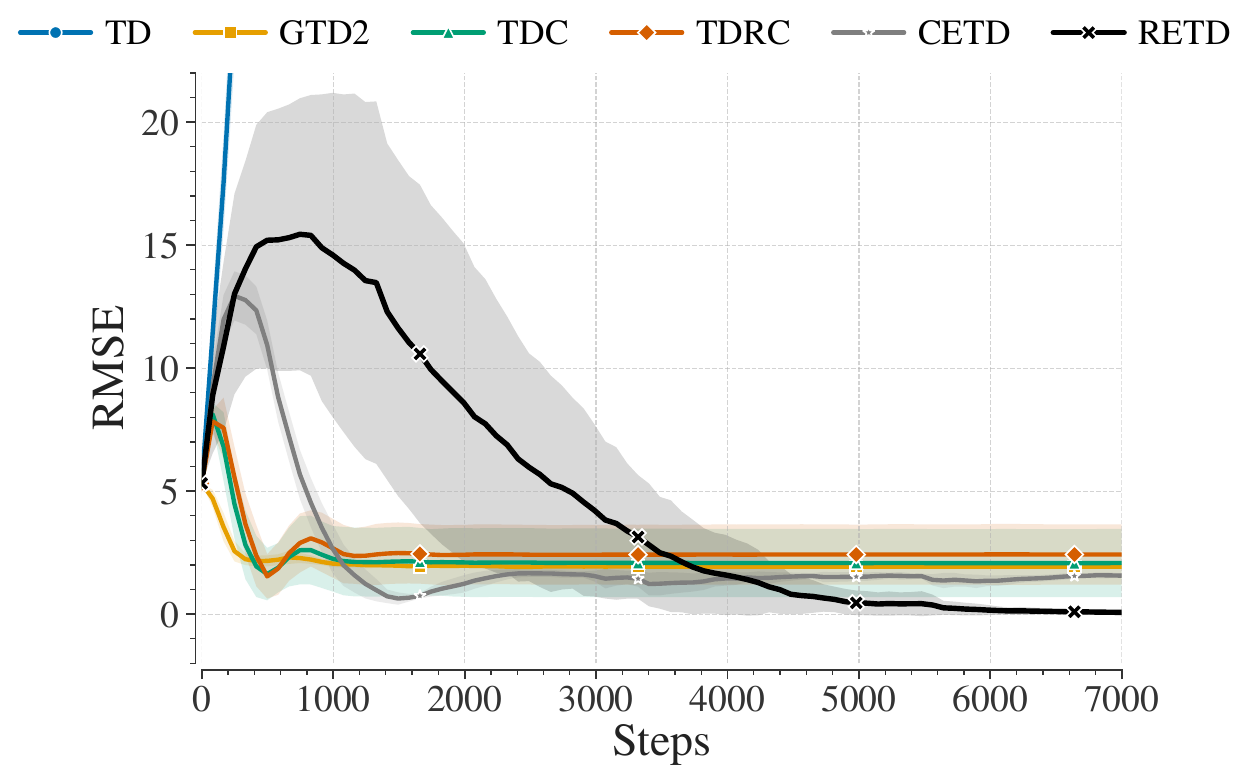}\caption{Baird}\end{subfigure}
\begin{subfigure}[t]{0.46\linewidth}\centering\includegraphics[width=\linewidth]{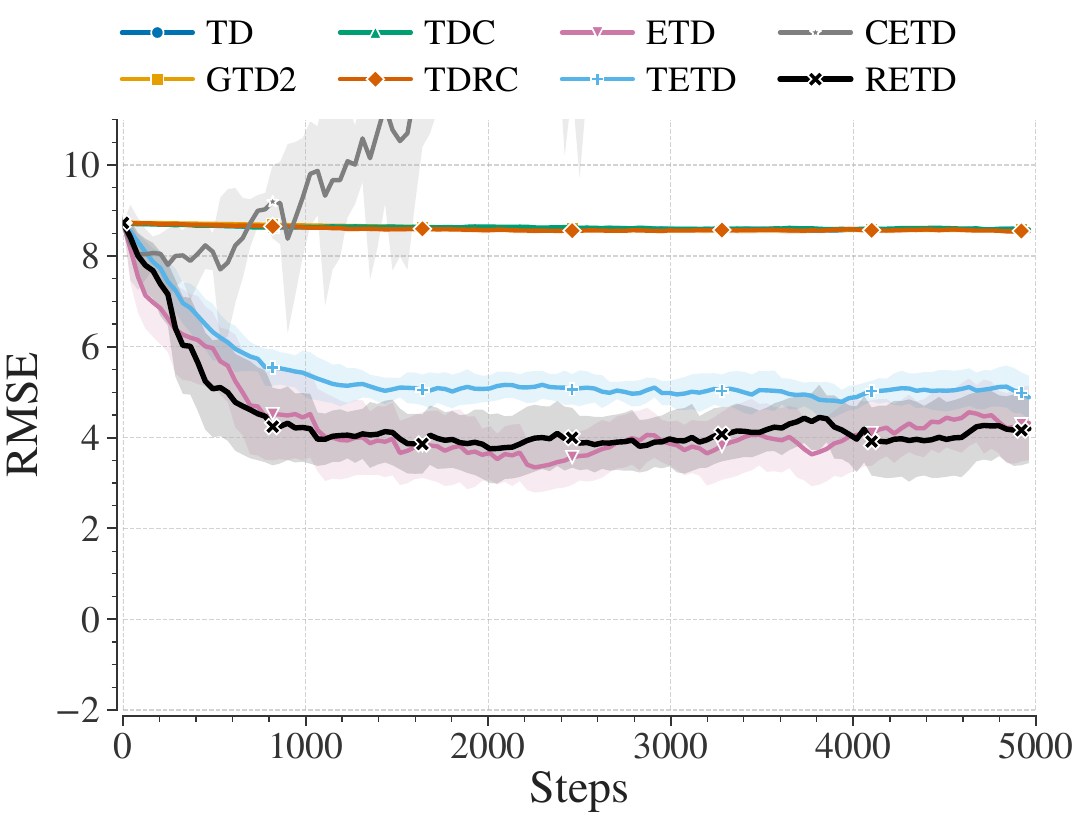}\caption{New two-state}\end{subfigure}
\caption{Algorithm comparisons at \(\alpha=0.005\). The smaller stepsize compresses the differences between methods while preserving their ordering relative to Figure~\ref{fig:appendix_alpha_0p01}.}
\label{fig:appendix_alpha_0p005}
\end{figure}

\begin{figure}[p]
\centering
\begin{subfigure}[t]{0.445\linewidth}\centering\includegraphics[width=\linewidth]{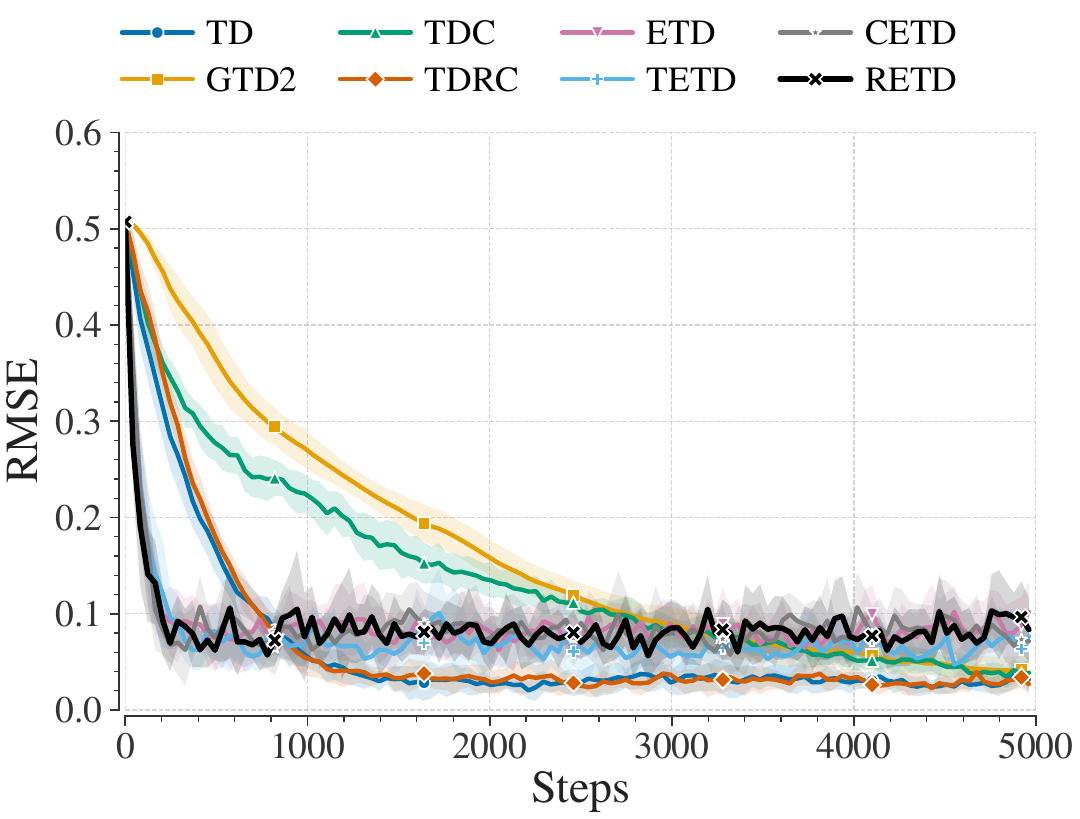}\caption{RW tabular}\end{subfigure}
\begin{subfigure}[t]{0.445\linewidth}\centering\includegraphics[width=\linewidth]{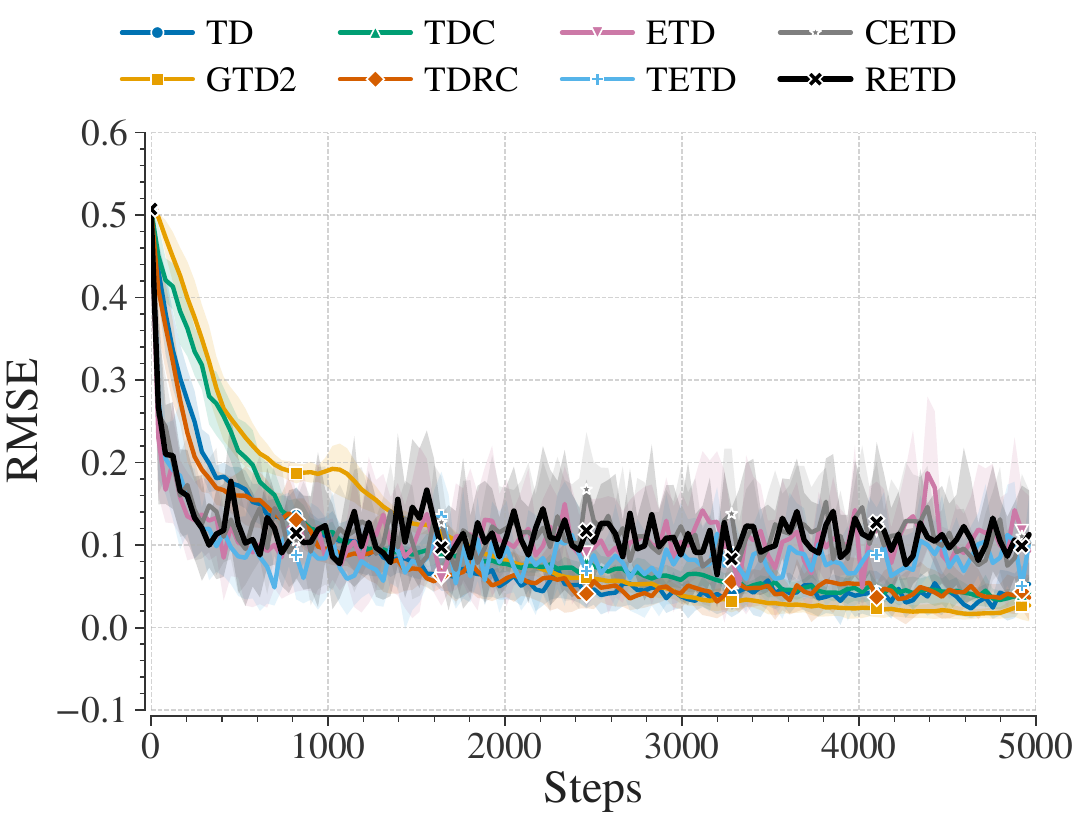}\caption{RW inverted}\end{subfigure}
\begin{subfigure}[t]{0.445\linewidth}\centering\includegraphics[width=\linewidth]{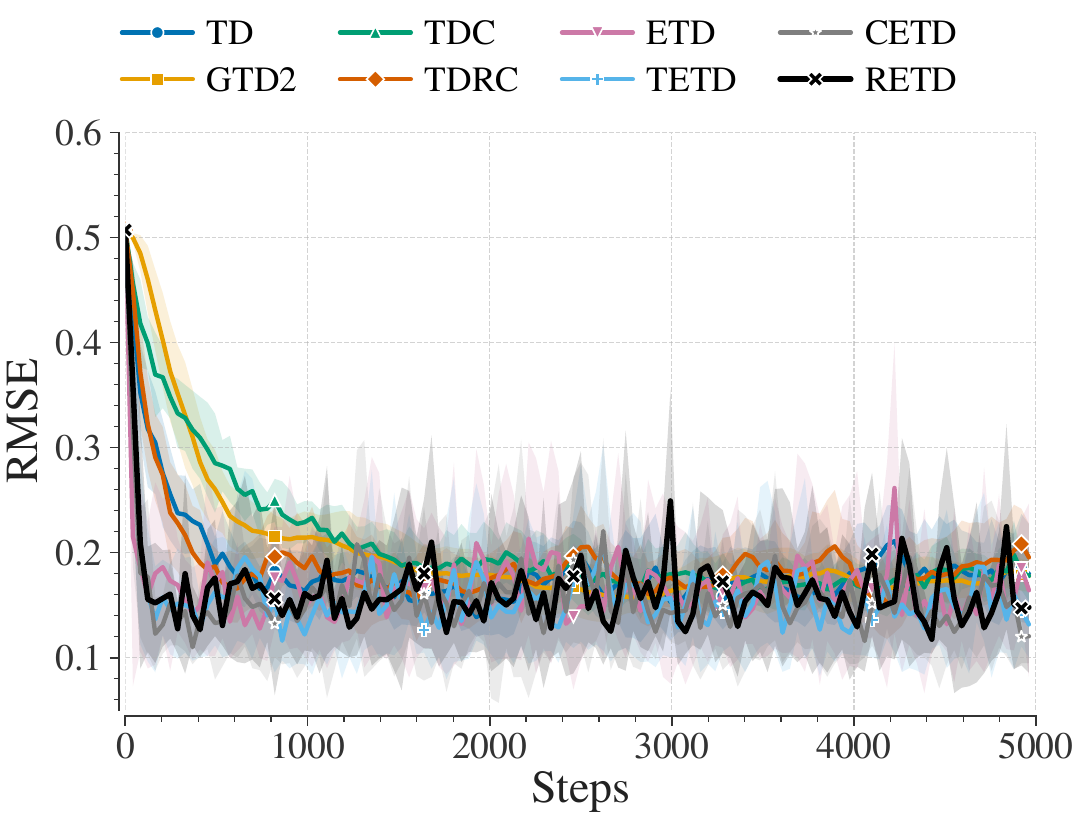}\caption{RW dependent}\end{subfigure}
\begin{subfigure}[t]{0.445\linewidth}\centering\includegraphics[width=\linewidth]{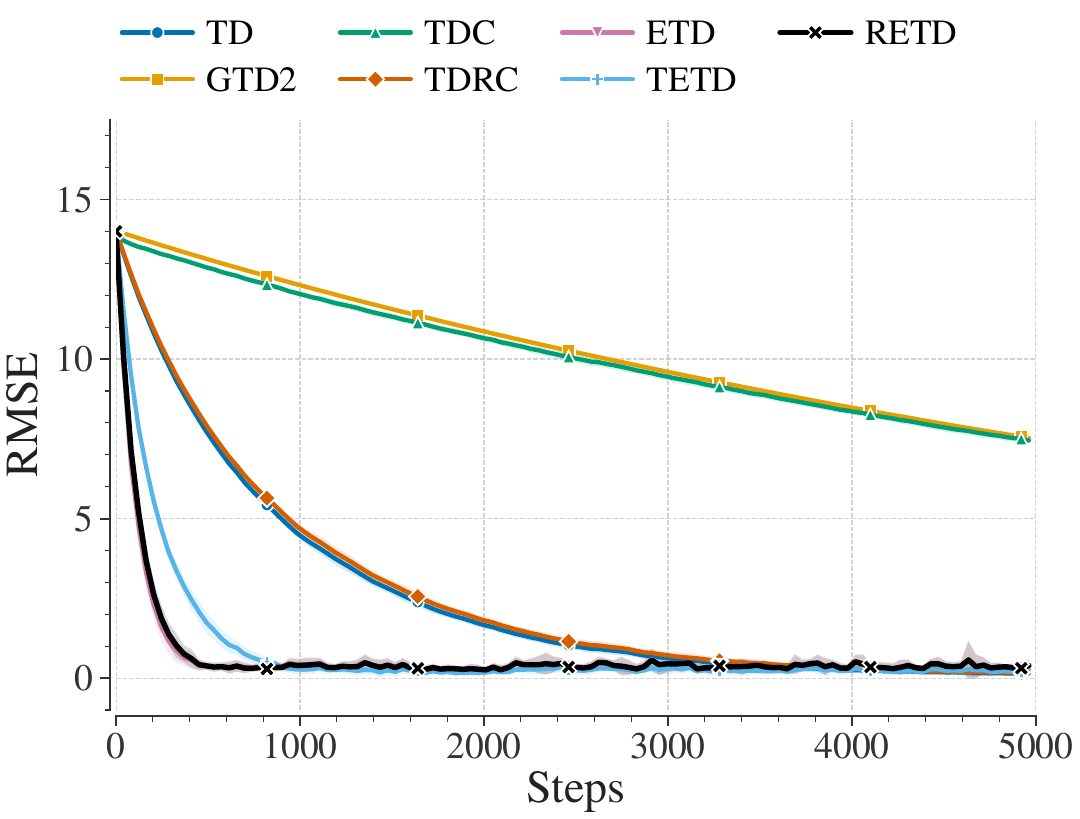}\caption{Boyan chain}\end{subfigure}
\begin{subfigure}[t]{0.445\linewidth}\centering\includegraphics[width=\linewidth]{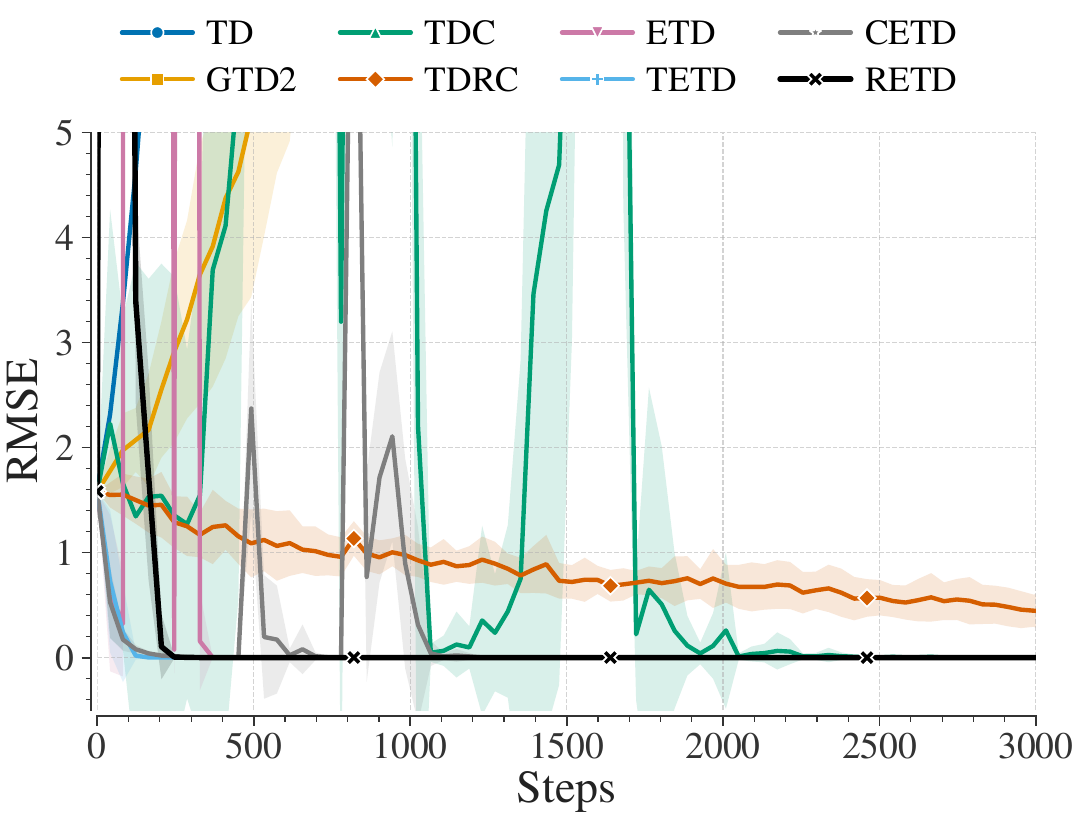}\caption{Two-state}\end{subfigure}
\begin{subfigure}[t]{0.445\linewidth}\centering\includegraphics[width=\linewidth]{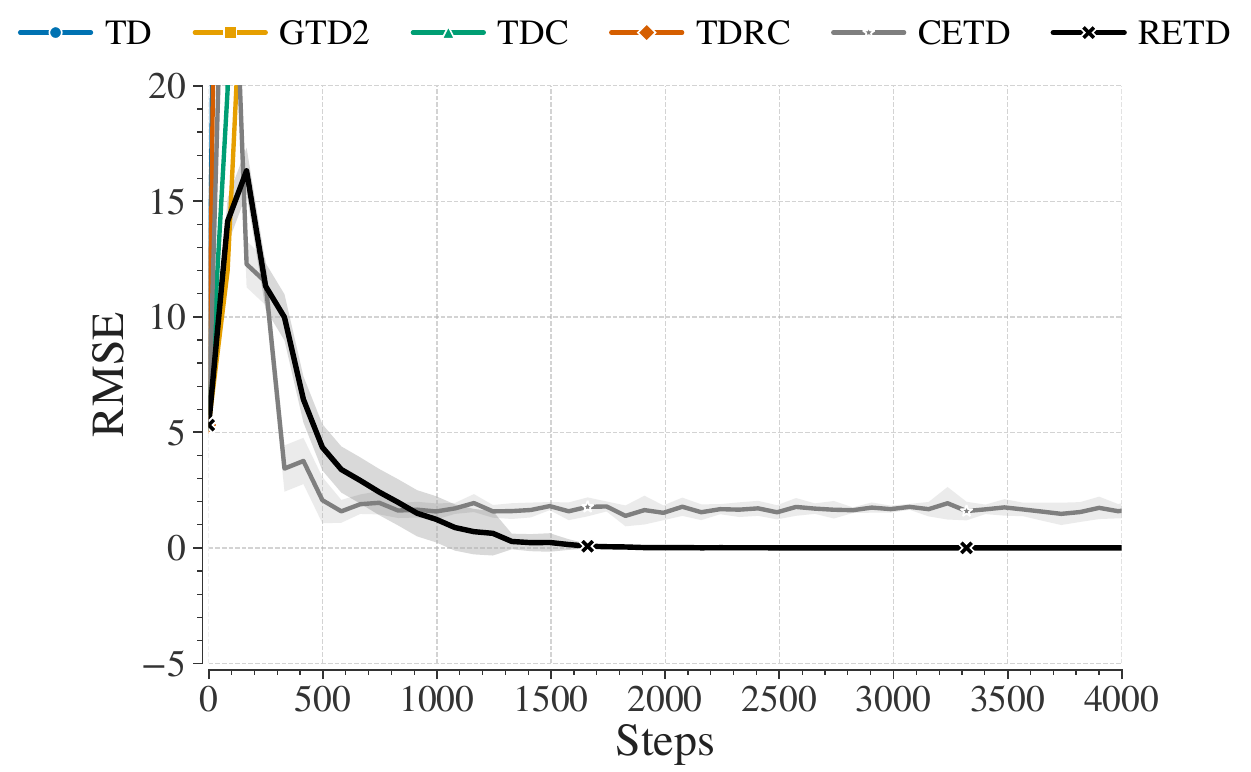}\caption{Baird}\end{subfigure}
\begin{subfigure}[t]{0.445\linewidth}\centering\includegraphics[width=\linewidth]{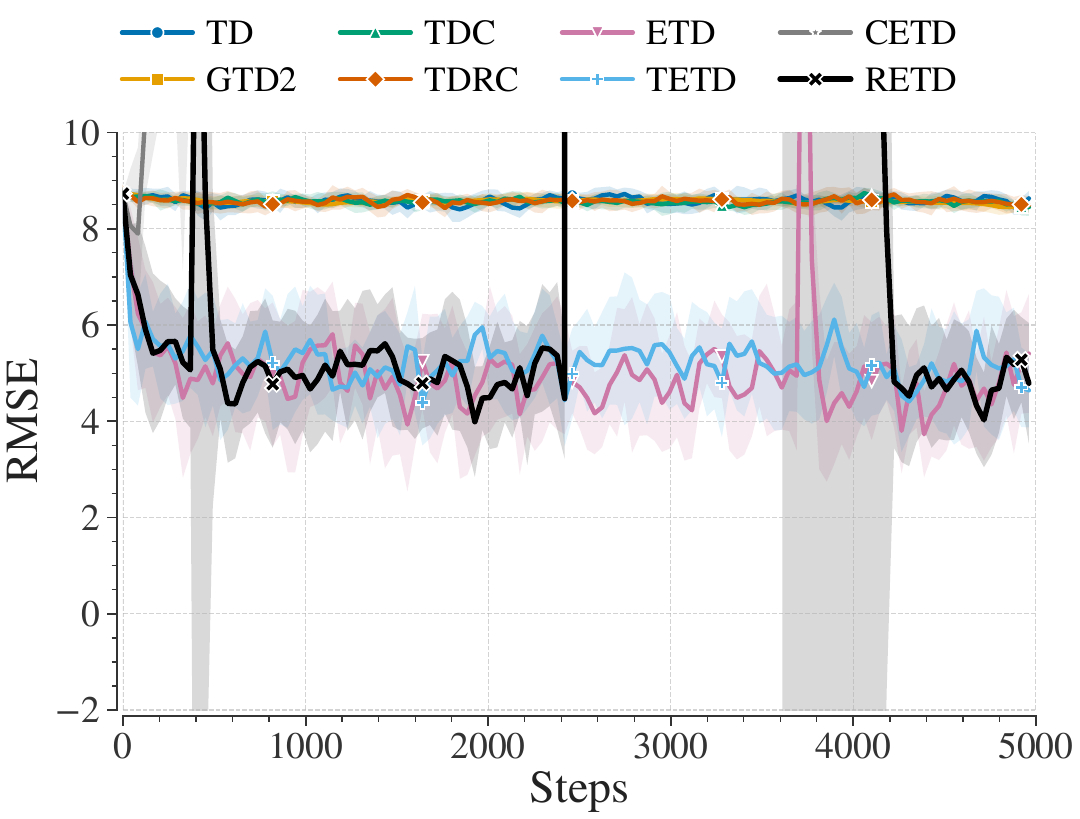}\caption{New two-state}\end{subfigure}
\caption{Algorithm comparisons at \(\alpha=0.05\). The larger stepsize amplifies the sensitivity of emphatic-trace methods and makes the stability gap between RETD and the naive emphatic methods more visible.}
\label{fig:appendix_alpha_0p05}
\end{figure}

\subsection{RETD robustness scans}

Figures~\ref{fig:appendix_c_scan_alpha_0p01} and~\ref{fig:appendix_alpha_scan_fixed_c} provide the expanded robustness evidence behind the main-text discussion of \(c\). The \(c\)-scan fixes \(\alpha=0.01\) and varies the regularization strength, illustrating the structural transition from CETD-like behavior at small \(c\) to ETD-like behavior at very large \(c\). The \(\alpha\)-scan fixes the environment-specific regularization value used in the main comparisons and varies the learning rate, showing that the method is not tuned to a single stepsize. Together they support the main-text claim that the practical regime for \(c\) is intermediate.

\begin{figure}[p]
\centering
\begin{subfigure}[t]{0.46\linewidth}\centering\includegraphics[width=\linewidth]{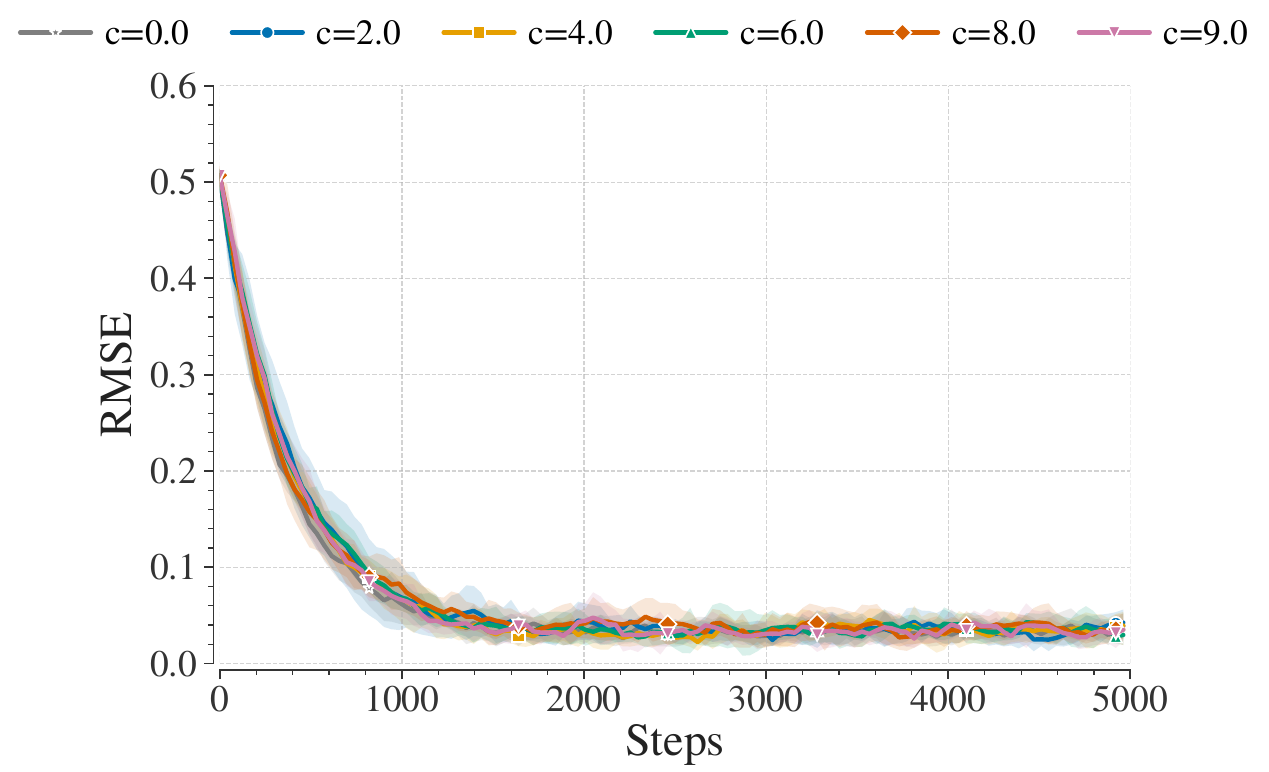}\caption{RW tabular}\end{subfigure}
\begin{subfigure}[t]{0.46\linewidth}\centering\includegraphics[width=\linewidth]{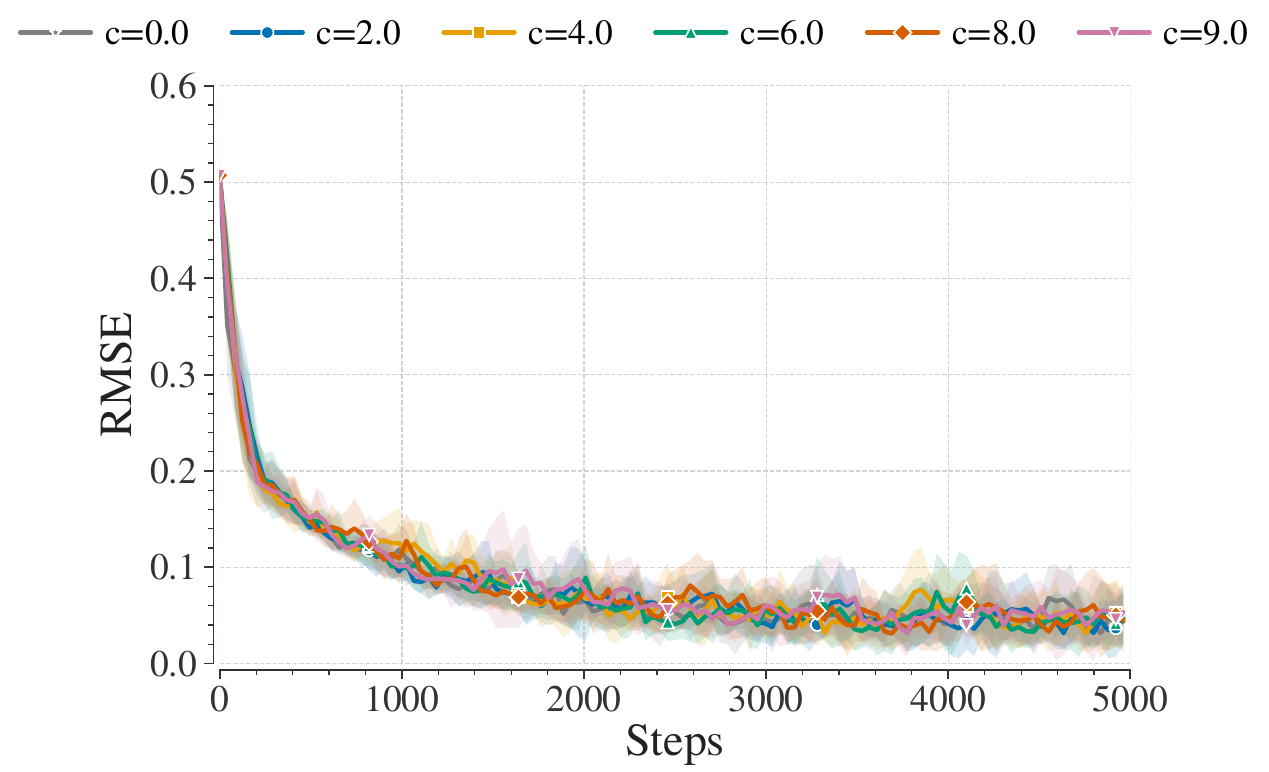}\caption{RW inverted}\end{subfigure}
\begin{subfigure}[t]{0.46\linewidth}\centering\includegraphics[width=\linewidth]{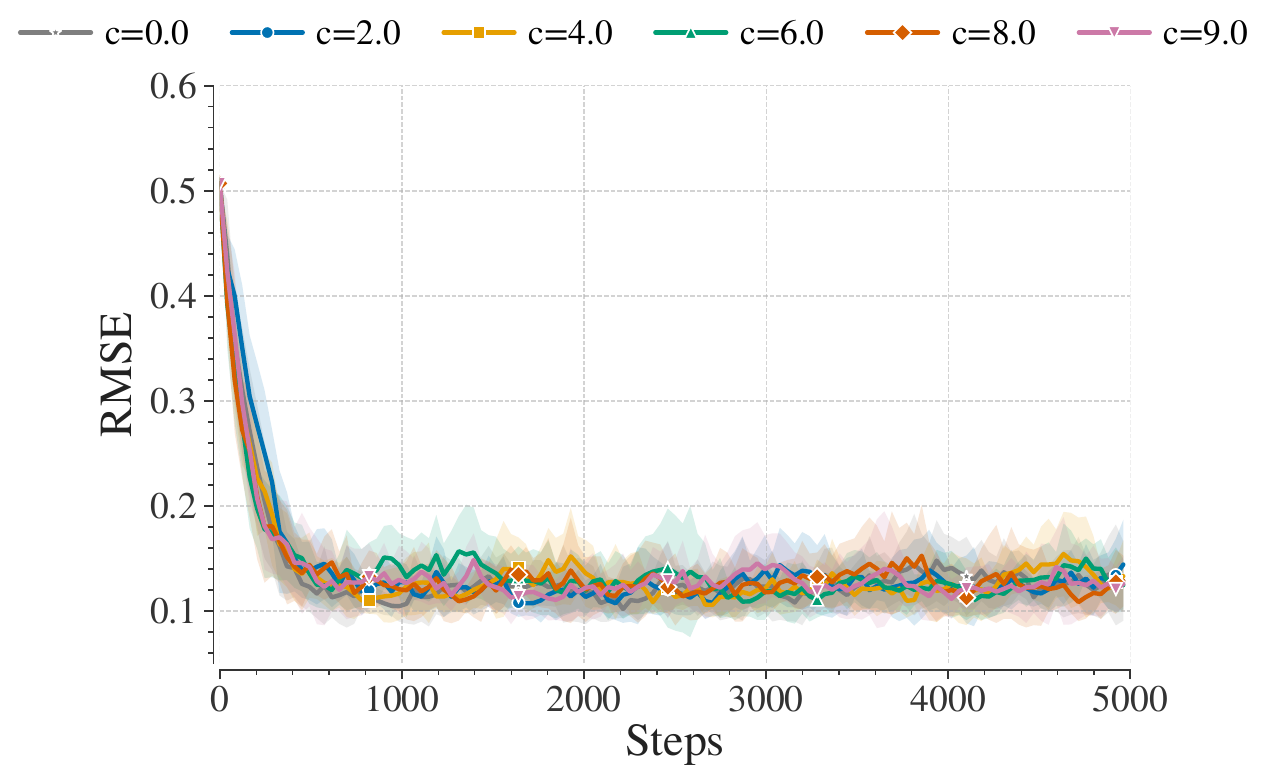}\caption{RW dependent}\end{subfigure}
\begin{subfigure}[t]{0.46\linewidth}\centering\includegraphics[width=\linewidth]{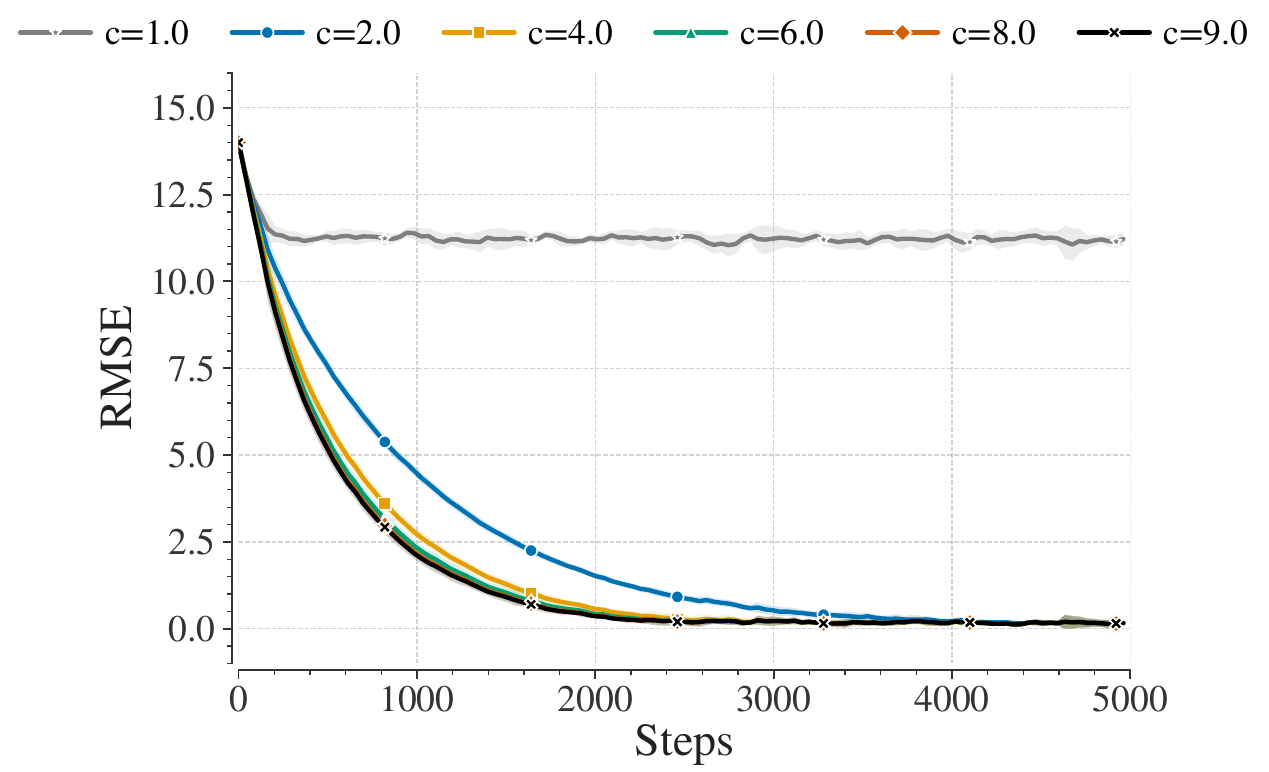}\caption{Boyan chain}\end{subfigure}
\begin{subfigure}[t]{0.46\linewidth}\centering\includegraphics[width=\linewidth]{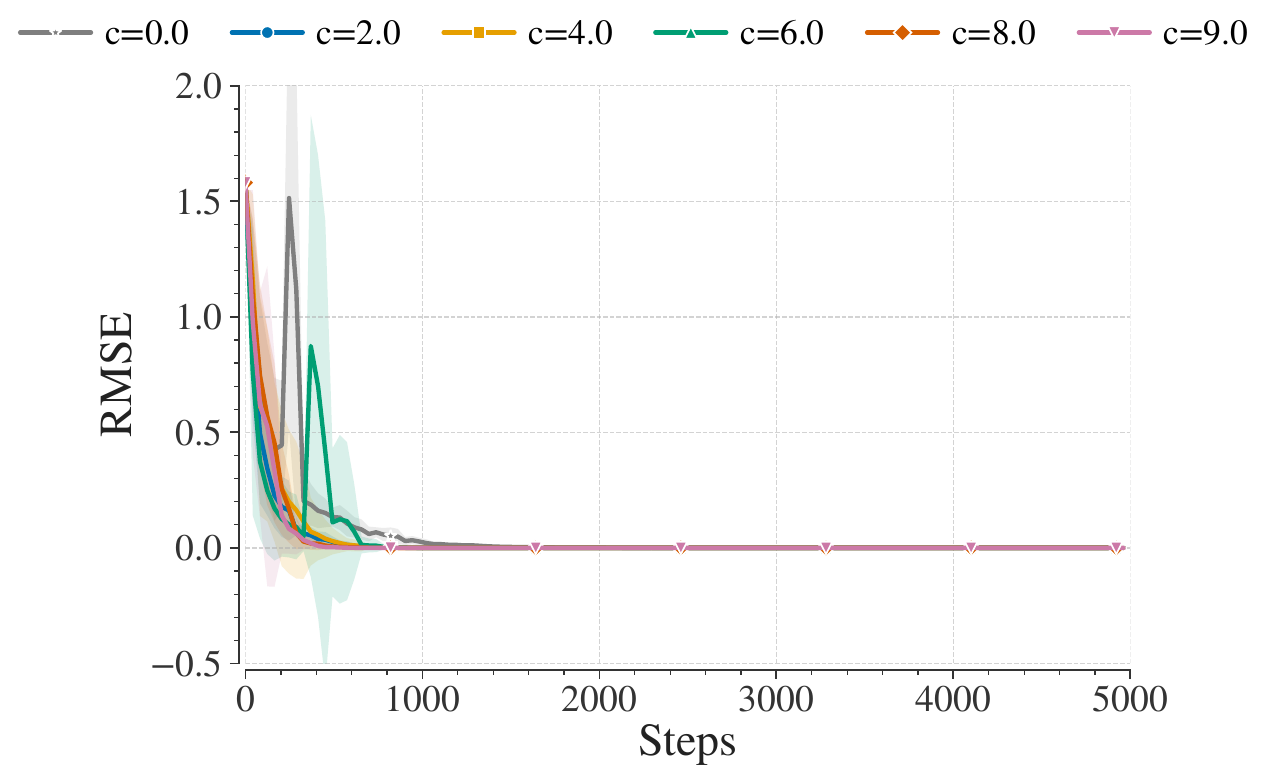}\caption{Two-state}\end{subfigure}
\begin{subfigure}[t]{0.46\linewidth}\centering\includegraphics[width=\linewidth]{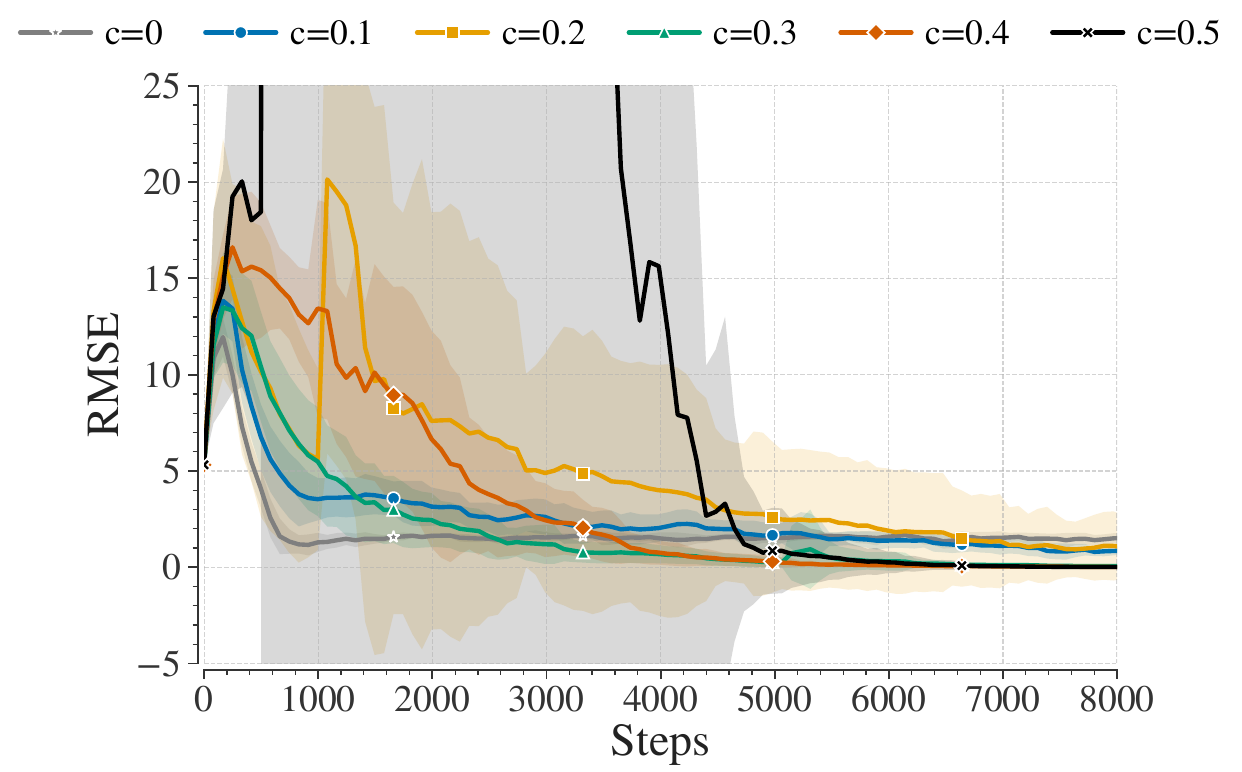}\caption{Baird}\end{subfigure}
\begin{subfigure}[t]{0.46\linewidth}\centering\includegraphics[width=\linewidth]{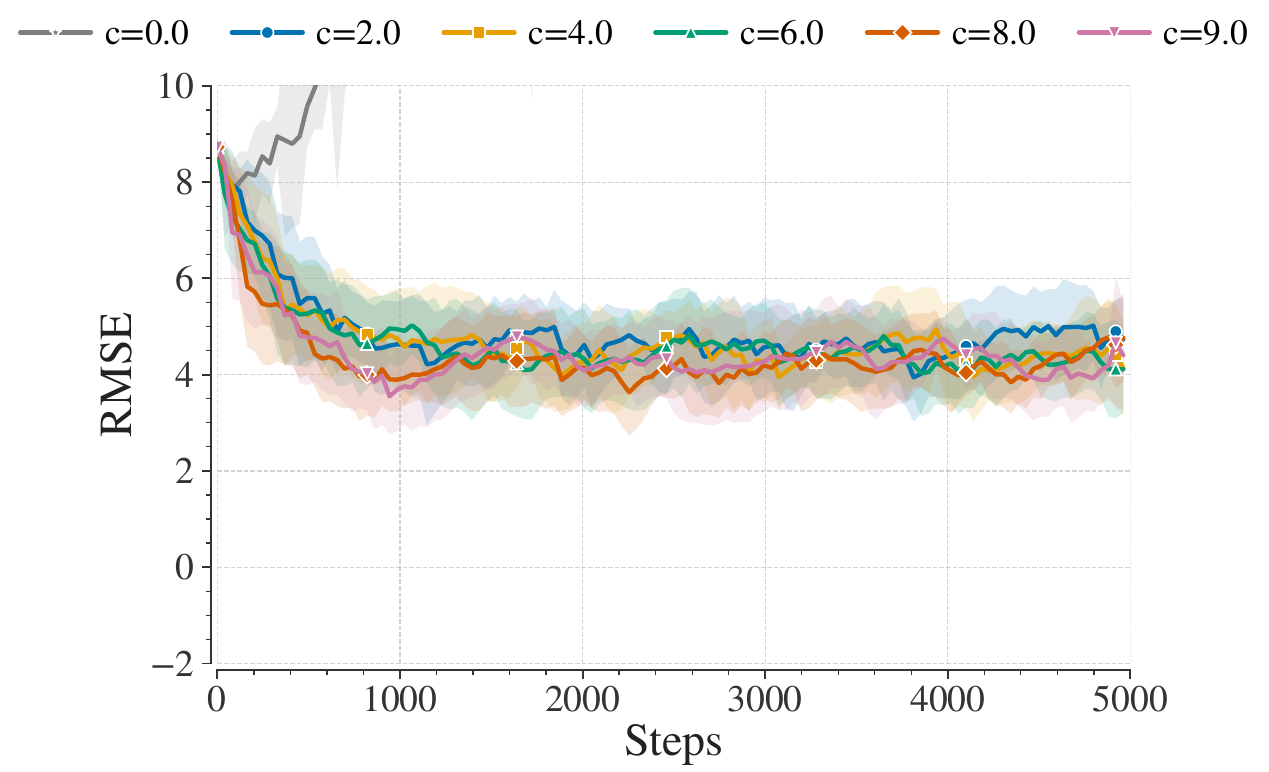}\caption{New two-state}\end{subfigure}
\caption{RETD \(c\)-scan at fixed \(\alpha=0.01\). Small \(c\) approaches CETD; very large \(c\) damps the auxiliary recursion and approaches ETD; intermediate values deliver stable centered emphatic learning.}
\label{fig:appendix_c_scan_alpha_0p01}
\end{figure}

\begin{figure}[p]
\centering
\begin{subfigure}[t]{0.46\linewidth}\centering\includegraphics[width=\linewidth]{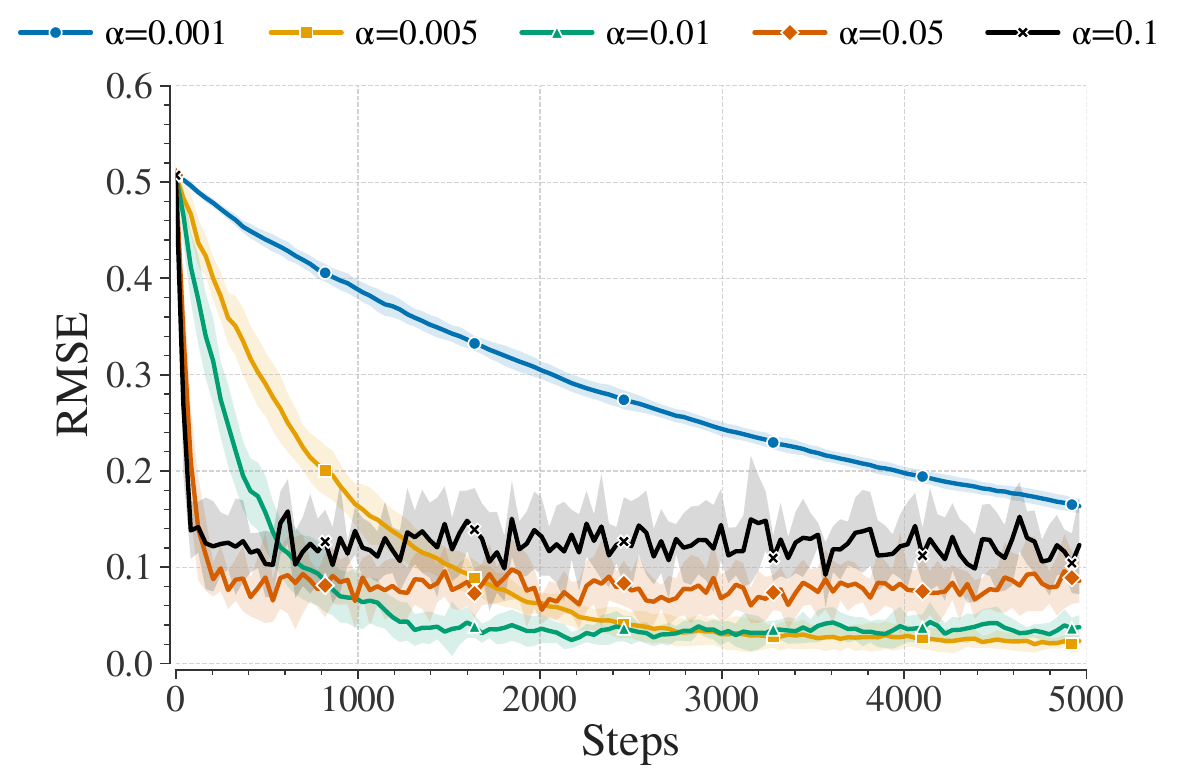}\caption{RW tabular}\end{subfigure}
\begin{subfigure}[t]{0.46\linewidth}\centering\includegraphics[width=\linewidth]{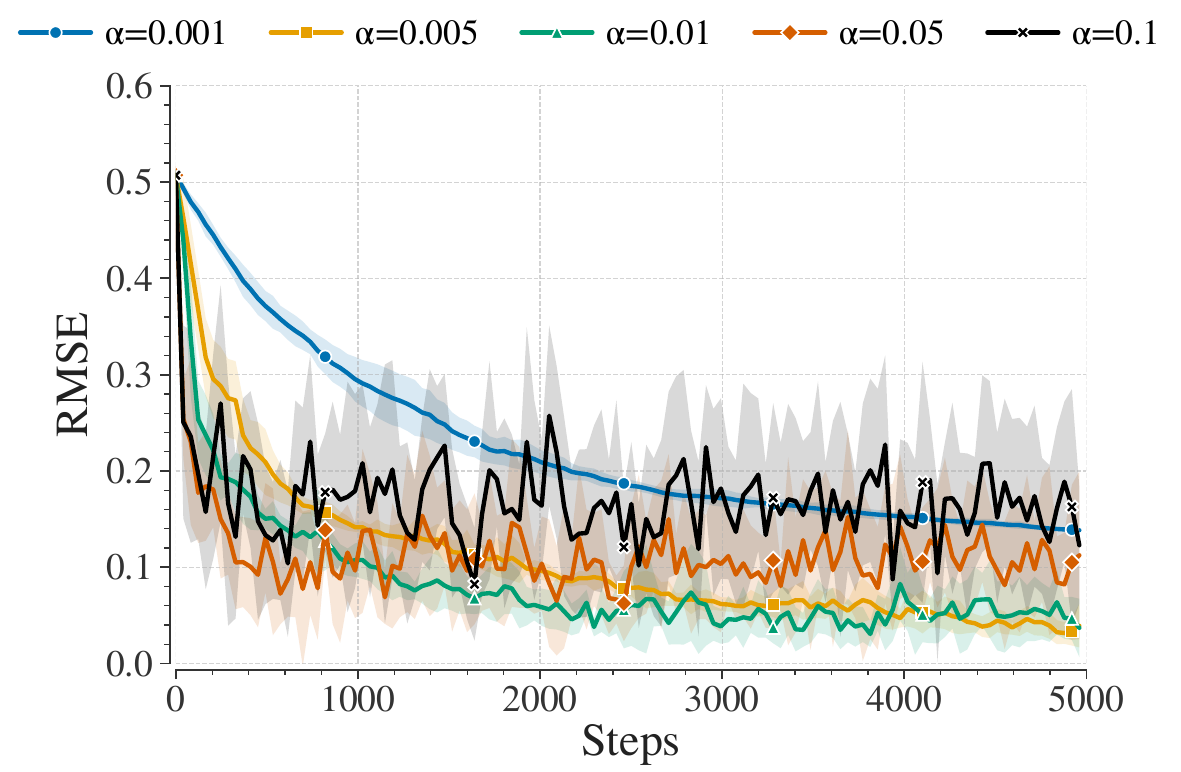}\caption{RW inverted}\end{subfigure}
\begin{subfigure}[t]{0.46\linewidth}\centering\includegraphics[width=\linewidth]{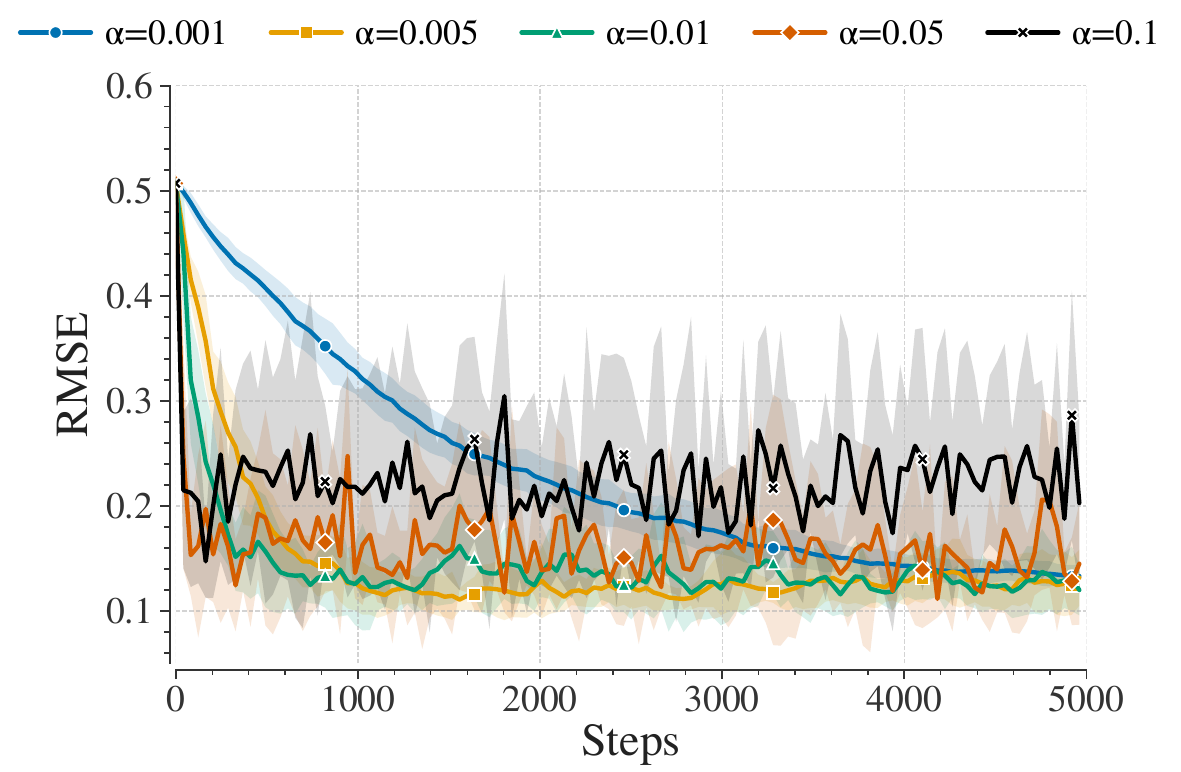}\caption{RW dependent}\end{subfigure}
\begin{subfigure}[t]{0.46\linewidth}\centering\includegraphics[width=\linewidth]{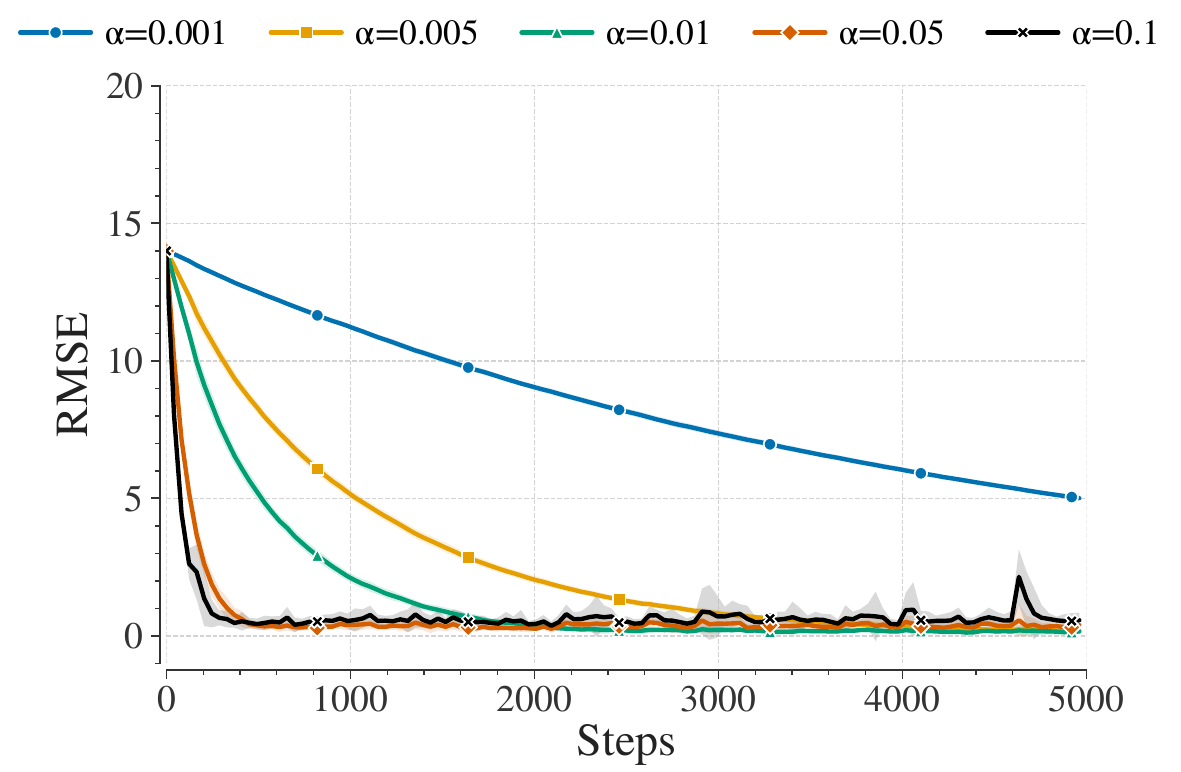}\caption{Boyan chain}\end{subfigure}
\begin{subfigure}[t]{0.46\linewidth}\centering\includegraphics[width=\linewidth]{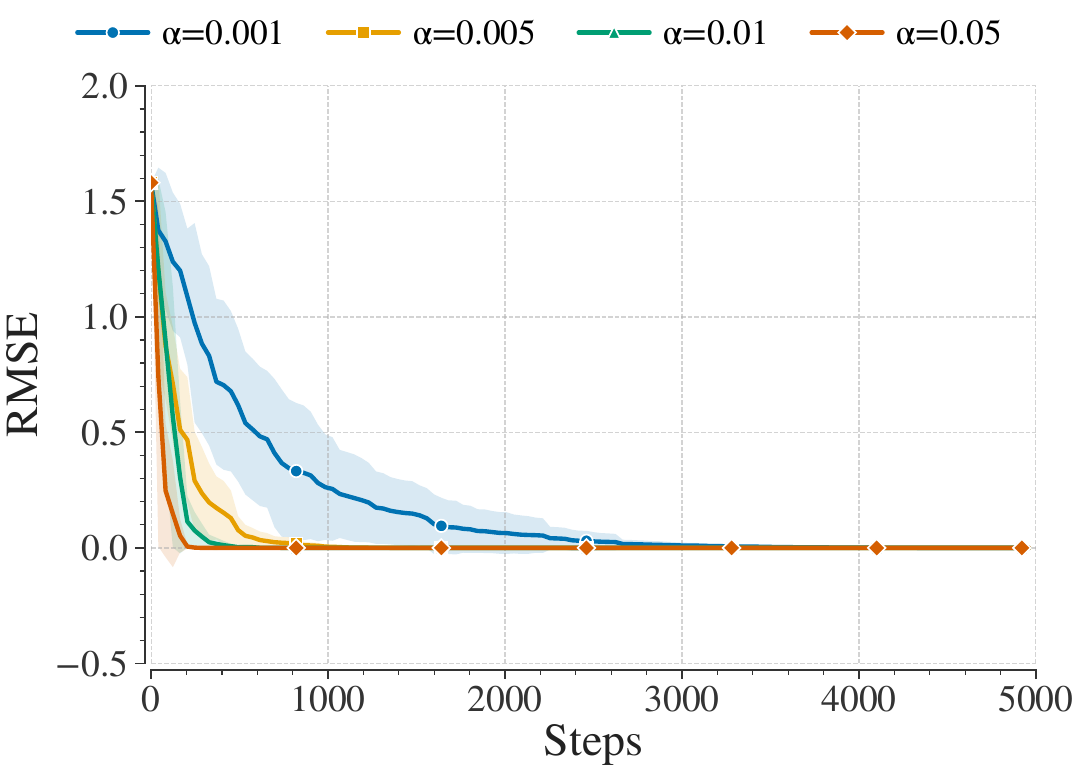}\caption{Two-state}\end{subfigure}
\begin{subfigure}[t]{0.46\linewidth}\centering\includegraphics[width=\linewidth]{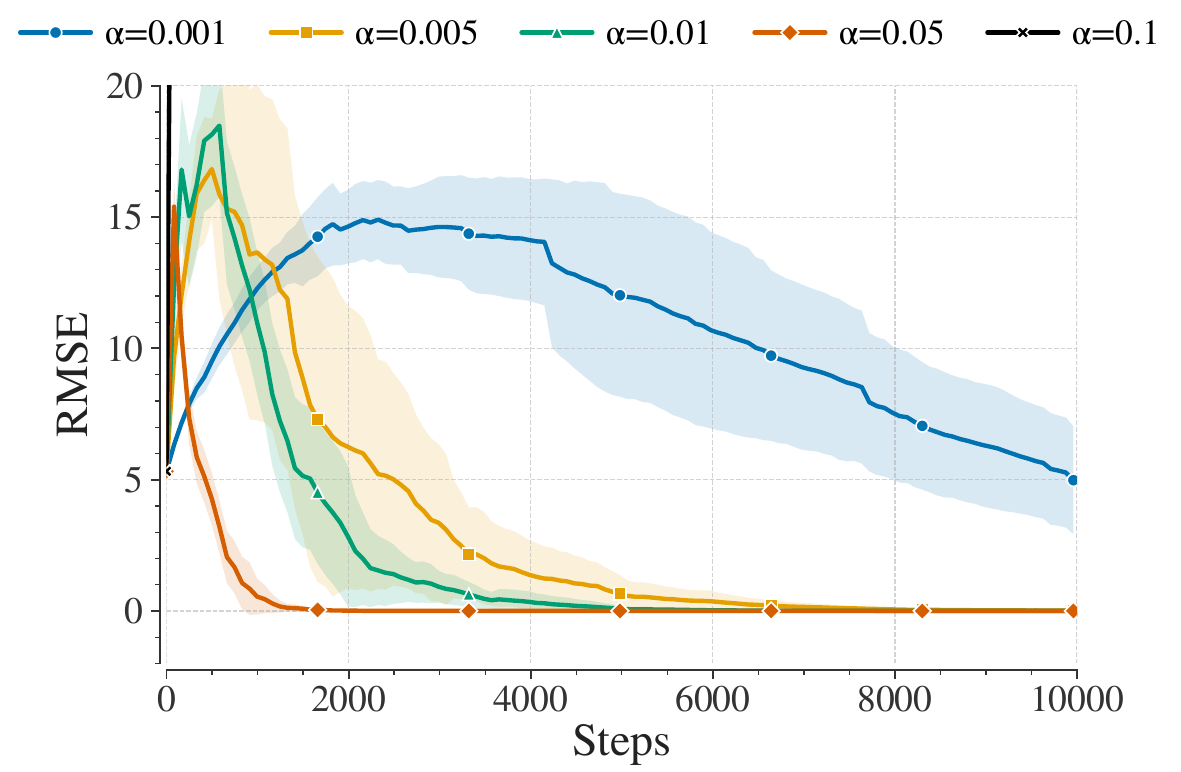}\caption{Baird}\end{subfigure}
\begin{subfigure}[t]{0.46\linewidth}\centering\includegraphics[width=\linewidth]{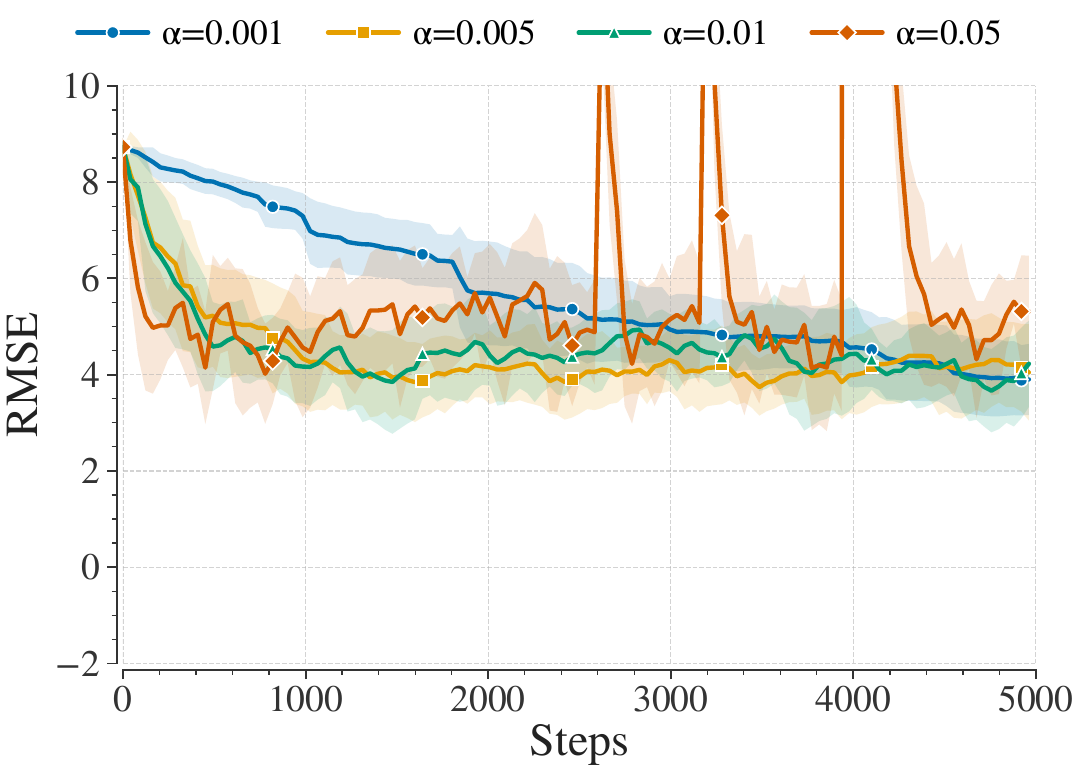}\caption{New two-state}\end{subfigure}
\caption{RETD learning-rate scan at the environment-specific regularization values used in the main comparisons. RETD remains stable across the swept stepsizes, indicating that the main-text choice \(\alpha=0.01\) is not a privileged setting.}
\label{fig:appendix_alpha_scan_fixed_c}
\end{figure}

\subsection{Structural reading of large \texorpdfstring{\(c\)}{c}}

The regularizer \(c\) controls how strongly the auxiliary centering variable acts on the value update. At \(c=0\) the method reduces to CETD and may inherit its indefinite coupled matrix; increasing \(c\) damps the auxiliary recursion and restores positive definiteness as analyzed in Section~\ref{sec:theory}; at very large \(c\) the auxiliary variable is forced close to zero and the update approaches the uncentered ETD update, so very large \(c\) can recover ETD's high-variance behavior and again yield inconsistent trajectories in the most aggressive off-policy regimes. This is the structural reason why the practical regime for \(c\) is intermediate, and it matches the empirical pattern reported in Figure~\ref{fig:appendix_c_scan_alpha_0p01}.
\FloatBarrier

\end{document}